\theoremstyle{plain}
\newtheorem{theorem}{Theorem}[section]
\newtheorem{lemma}[theorem]{Lemma}
\theoremstyle{definition}
\newtheorem{definition}[theorem]{Definition}
\theoremstyle{remark}
\icmltitlerunning{Provably Better Explanations with Optimized Aggregation of Feature Attributions}
\begin{document}

\twocolumn[
\icmltitle{Provably Better Explanations with Optimized Aggregation of \\ Feature Attributions}



\icmlsetsymbol{equal}{*}

\begin{icmlauthorlist}
\icmlauthor{Thomas Decker}{lmu,siemens}
\icmlauthor{Ananta R. Bhattarai}{siemens,tum}
\icmlauthor{Jindong Gu}{oxford}
\icmlauthor{Volker Tresp}{lmu,mcml}
\icmlauthor{Florian Buettner}{siemens,goethe,dkfz}

\end{icmlauthorlist}

\icmlaffiliation{lmu}{LMU Munich}
\icmlaffiliation{siemens}{Siemens AG}
\icmlaffiliation{tum}{Technical University of Munich}
\icmlaffiliation{mcml}{Munich Center for Machine Learning (MCML)}
\icmlaffiliation{dkfz}{German Cancer Research Center (DKFZ)}
\icmlaffiliation{oxford}{University of Oxford}

\icmlaffiliation{goethe}{Goethe University Frankfurt}
\icmlcorrespondingauthor{Thomas Decker}{thomas.decker@siemens.com}
\icmlcorrespondingauthor{Florian Buettner}{florian.buettner@dkfz.de}

\icmlkeywords{Machine Learning, ICML}

\vskip 0.3in
]



\printAffiliationsAndNotice{}  

\begin{abstract}
Using feature attributions for post-hoc explanations is a common practice to understand and verify the predictions of opaque machine learning models. Despite the numerous techniques available, individual methods often produce inconsistent and unstable results, putting their overall reliability into question. In this work, we aim to systematically improve the quality of feature attributions by combining multiple explanations across distinct methods or their variations. For this purpose, we propose a novel approach to derive optimal convex combinations of feature attributions that yield provable improvements of desired quality criteria such as robustness or faithfulness to the model behavior. Through extensive experiments involving various model architectures and popular feature attribution techniques, we demonstrate that our combination strategy consistently outperforms individual methods and existing baselines.
\end{abstract}
\begin{figure*}[t!]
    \centering
    \includegraphics[trim={0 0.5cm 0 0}, width=\textwidth]{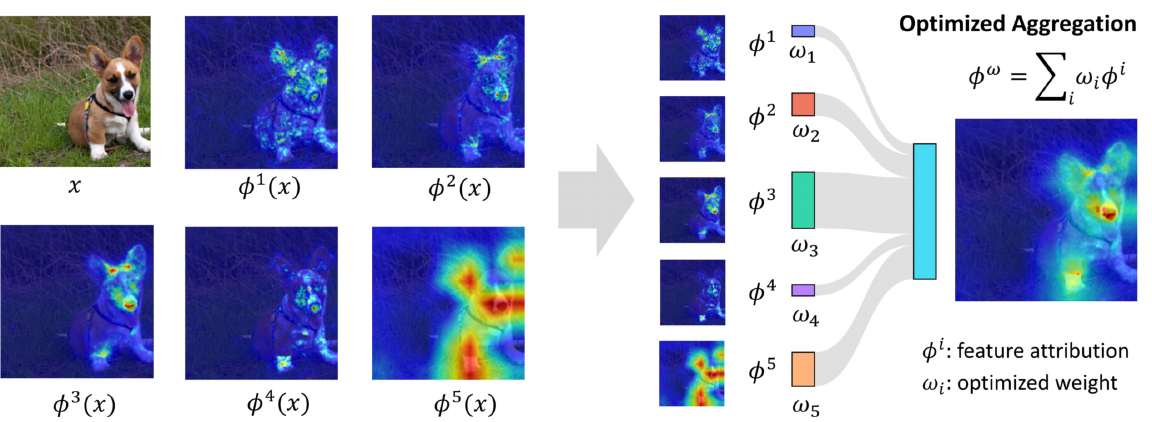}
    \caption{\textbf{Disagreement across attribution methods (left):}  Different feature attribution methods $(\phi^1, \dots , \phi^5)$ provide distinct perspectives about which particular features of an input $x$ are important for an opaque model prediction $f(x)$. Oftentimes they tend to disagree causing ambiguity about which inputs truly matter. \textbf{Our Optimized Aggregation approach (right):}  We study how to combine all individual attribution results fruitfully to attain better explanations. We propose a novel aggregation approach to retrieve optimal convex weights $\omega_i$ such that the aggregated feature attribution $\phi^{\omega} = \sum_i \omega_i \phi^i$ is provably more robust and more faithful to the underlying model.}
    \label{method:fig}
\end{figure*}
\section{Introduction}
The practice of quantifying the influence of individual features through attribution methods has been established as a popular paradigm to enhance the transparency of complex machine learning models. These approaches typically produce heatmaps highlighting individual input features, such as pixels or image regions, relevant to a specific model prediction (see Figure \ref{method:fig}, left). However, while a multitude of techniques has been developed for this purpose, concerns and doubts regarding the reliability of individual methods persist \cite{adebayo2018sanity, adebayo2020debugging, zhou2022feature}.
For instance, empirical evidence has revealed that single methods often exhibit unreasonable sensitivity to input perturbations \cite{kindermans2019reliability, alvarez2018robustness, dombrowski2019explanations,ghorbani2019interpretation, lin2023robustness} and critically depend on the concrete hyperparameter choice \cite{bansal2020sam, sturmfels2020visualizing, Pahde_2023_CVPR}. This lack of explanation robustness not only causes unstable attribution results but can even be exploited for malicious manipulations \cite{baniecki2023adversarial}. On top of that, some methods may fail to identify relevant features \cite{hooker2019benchmark, zhou2022feature}
and different techniques frequently disagree substantially when explaining the same prediction \cite{krishna2022disagreement, neely2021order}. These findings contest the actual fidelity of a single attribution result for the purpose model explainability. \\
In addition to these observations, there is a growing body of theoretical work that highlights the limitations of individual attribution methods \cite{nie2018theoretical,sixt2020explanations,kumar2021shapley,bilodeau2022impossibility, fokkema2022attribution}. More specifically, in \cite{han2022explanation}, the authors establish a "no-free lunch" theorem for model explanations, which implies that a single attribution method cannot universally approximate the behavior of any model faithfully.\\
Nevertheless, each feature attribution method derives importance based on different mechanisms and each can be associated with individual benefits and shortcomings. As a consequence, the question arises of how to best combine them to attain better explainability of opaque predictions.\\
In this work, we explore the capabilities of convex combinations across different attribution results to improve the overall reliability of explanations. Guided by established quality criteria for feature attributions \cite{nauta2023anecdotal}, we propose an effective strategy to derive convex weightings such that the corresponding aggregation of different outcomes yields significant improvements in robustness and faithfulness. This is underpinned by a theoretical analysis showing that the improvements in relevant quality metrics are provable and even close to optimal with high probability. 
\\
Our specific contributions are the following:
\begin{itemize}[leftmargin=12pt]
    \item We introduce an innovative approach for combining the results generated by various feature attribution methods or different variants of the same method.
    \item We show that our method can be effectively employed to optimize explanations according to commonly used measures of quality, including robustness and faithfulness to the model's behavior based on a unifying framework.
    \item We conduct a rigorous theoretical analysis establishing provable improvements of explanation quality and
    corresponding optimality bounds for our approach.
    \item We manifest these findings through a series of experiments involving popular feature attribution techniques and model architectures, consistently outperforming existing baselines and individual methods.
\end{itemize}

\section{Problem Setup}
Our goal is to enhance the reliability of explanations by developing effective strategies for combining diverse feature attribution results. To illustrate, let's consider explaining a prediction, denoted as $f(x)$, of a classification model $f:\mathbb{R}^{d} \rightarrow \mathbb{R}$, for an input instance $x \in \mathbb{R}^{d}$. A feature attribution method $\phi:\mathbb{R}^{d} \rightarrow [0,1]^{d}$, explains the prediction $f(x)$ by associating to each separate input $x_{i}$ a normalized importance score $\phi_{i}(x)$. Suppose we have access to $k$ different attribution methods, denoted by $\phi^1, \dots, \phi^k$, each offering a distinct perspective. Further, let $\omega_1, \dots, \omega_k$ be scalar weights such that $\sum_i \omega_i = 1$, and each weight is non-negative $\left(\omega_i \ge 0\right)$. To aggregate distinct attribution outcomes we consider the weighted sum $\sum_i \omega_i \phi^i(x)$ yielding a novel explanation that combines individual insights. Our objective is to determine prediction-specific weights $\omega_i$ in a manner that provably improves desired quality metrics. This ultimately leads to more reliable and better explanations via aggregation (see Figure \ref{method:fig}, right).
\section{Background and Related Work}
\subsection{Measuring attribution quality}
Evaluating the fidelity of explanations is a challenging endeavor due to missing knowledge about an objective ground truth. However, several quantitative metrics have been proposed to assess different aspects concerning the quality of feature attribution results \cite{nauta2023anecdotal, hedstrom2023quantus}. In our study, we focus on two prominent categories of explanation quality:
\paragraph{Robustness}
Many explanation methods exhibit instabilities under small input perturbations \cite{alvarez2018robustness} leading to significantly different feature attribution results for almost identical inputs. While this not only casts doubts regarding the explanatory integrity of the considered technique, it might further be exploited to manipulate explanations intentionally \cite{baniecki2023adversarial}. A popular metric to quantitatively measure attribution robustness is Max-Sensitivity \cite{yeh2019fidelity}:
 \begin{align*}
     \text{SENS}_{\textit{MAX}} : \max_{\lVert \varepsilon \rVert \le \delta} \quad \lVert \phi(x) - \phi(x + \varepsilon) \rVert
 \end{align*}
 This quantity is typically estimated using a Monte Carlo approach by sampling a fixed number of small perturbations, evaluating the explanations, and storing the maximal distortion. An alternative metric for attribution robustness is Average-Sensitivity \cite{bhatt2021evaluating}:
  \begin{align*}
     \text{SENS}_{\textit{AVG}} : \mathbb{E}_{\varepsilon} \left[ \lVert \phi(x) - \phi(x+\varepsilon) \rVert \right]
 \end{align*}
where $\varepsilon \sim \mathbb{P}_{\varepsilon}$ is a small random input perturbation, typically either Gaussian or uniformly distributed with mean zero. 
\paragraph{Faithfulness}
The goal of faithfulness metrics is to measure how aligned an attribution result is with the actual model behavior in the sense that perturbing important features should also alter the model prediction accordingly. While different mathematical formulations have been proposed, a prominent choice is Infidelity \cite{yeh2019fidelity}:
\begin{align*}
    \text{INFD}: \mathbb{E}_{I} \left[ ( I^T\phi(x) - (f(x)-f(x-I)))^2 \right]
\end{align*} 
Here, $I \in \mathbb{R}^d$ describes a probabilistic perturbation such as replacing random parts of $x$ with a fixed baseline value or Gaussian noise \cite{yeh2019fidelity}. Similarly, \cite{bhatt2021evaluating} proposed to quantify faithfulness to the model's behavior using a correlation measure:
\begin{align*}
    \text{FCOR}: \text{corr}_I \left(I^T\phi(x),  (f(x)-f(x-I)) \right)
\end{align*}
Thus, Faithfulness Correlation (FCOR) measures how correlated the attribution scores are with prediction changes under corresponding input modifications. 
\paragraph{Other metrics}
Beyond robustness and faithfulness, additional dimensions of explanation quality have also been investigated in the literature. Alignment metrics \cite{ARRAS202214,decker2023does} measure to which extent an explanation matches a desirable ground truth derived from domain knowledge and randomization-based sanity checks \cite{adebayo2018sanity, hedstrom2023sanity} ensure a sufficient dependence of the attribution result on the examined model. Moreover, Complexity metrics \cite{bhatt2021evaluating, chalasani2020concise} quantify how comprehensible a model explanation is given the premise that sparser attributions are more informative to humans due to reduced cognitive load. Please refer to \cite{nauta2023anecdotal} for a more comprehensive overview of available metrics.

\subsection{Aggregating explanations}
The idea of aggregating multiple feature attribution results \textbf{within} the same method is already anchored in popular explainability techniques. SmoothGrad \cite{smilkov2017smoothgrad} and UniformGrad \cite{wang2020smoothed} combine gradients in the proximity of the input and VarGrad \cite{adebayo2018sanity} uses the variance of gradients within a neighborhood to derive feature importance. Similarly, Integrated Gradients \cite{sundararajan2017axiomatic} and GradSHAP \cite{erion2021improving} aggregate gradients along a specific path towards predetermined baseline values. While such techniques combine gradients following input perturbation, NoiseGrad \cite{bykov2022noisegrad} averages gradients under model parameter modifications to form a final explanation. In \cite{rebuffi2020there} the authors analyze how the combination of attribution results obtained from different layers can improve the final explanation. The authors of \cite{bhatt2021evaluating} propose to enhance explanations by combining the Shapley Values of an instance with the ones obtained from its nearest neighbors in the training dataset. \\
On the other hand, the idea of aggregating attribution results \textbf{across} distinct methods has received considerably less attention. In \cite{rieger2019aggregating}, the authors propose two basic ways to combine distinct explanations which are defined as follows. $\text{AGG}_{\textit{Mean}}=\frac{1}{k}\sum_{i=1}^k \phi^i(x)$ simply averages different attribution outcomes and $\text{AGG}_{\textit{Var}}$ incorporates also feature-wise variability to downgrade the importance of features where methods tend to disagree on:
\begin{align*}
    \text{AGG}_{\textit{Var}} = \frac{1}{k}\sum_{i=1}^k \frac{\phi^i(x)}{\sigma(\phi^{1}, \dots, \phi^{k}) + \epsilon }
\end{align*} where $\sigma(\phi^{1}, \dots, \phi^{k}) \in \mathbb{R}^d$ describes the feature-wise standard deviation across the different attribution results and $\epsilon$ is a small constant promoting numerical stability. \\
Nevertheless, a theoretically grounded strategy of how to best combine different attribution results for desired improvements is still missing and we aim to address this gap in the remainder of this paper. 
 
\section{Optimizing Explanations with Aggregation}
\paragraph{Generalized L2 metrics for explanations}
In this section, we introduce a general class of quality metrics for explanation methods that can efficiently be improved via cross-method combination as shown later. 
\begin{definition}
    Let $\mathcal{Q}: \mathbb{R}^d\rightarrow \mathbb{R}$ a be a quality metric for feature attribution results. Then, $\mathcal{Q}$ belongs to the class of generalized $L2$ metrics if there exist suitable random variables $\gamma_1 \in \mathbb{R}^{g \times d}$ and $\gamma_2 \in \mathbb{R}^{g}$ such that :
    \begin{align*}
    \mathcal{Q}(\phi(x))=\mathbb{E}_{\gamma_1, \gamma_2}[ \lVert \gamma_1 \phi(x) - \gamma_2\rVert_2^2 ]
    \end{align*}
\end{definition}
Conceptually, any such metric evaluates the quality of an attribution result $\phi(x)$ using the following intuitive principle. First, a linear query $\gamma_1$ is applied to extract certain information from the attribution results. Second, the obtained information content is compared to a desired query outcome $\gamma_2$ using the squared Euclidean distance. Thus, a smaller value of $\mathcal{Q}$ implies a better attribution result according to the considered criteria. Evaluating such metrics can simply be performed by estimating the expectation with a finite set of metric evaluation samples denoted by $\{(\gamma_1^{(j)}, \gamma_2^{(j)})\}_{i=1}^m$. \\ Note that common quality metrics introduced above are generalized $L2$ metrics. For example, Average-Sensitivity can be recovered in the following way: Let $\mathcal{I}_d$ be the $d$-dimensional identity matrix, then setting $\gamma_1 =\mathcal{I}_d$ and $\gamma_2 = \phi(x+\varepsilon)$ with $\varepsilon \sim \mathbb{P}_{\varepsilon}$ results in $\text{SENS}_{\textit{AVG}} $ with respect to the squared Euclidean norm. Similarly, Infidelity can be obtained by choosing $\gamma_1 = I^T$ and $\gamma_2 = f(x) -f(x-I)$. We further show in \autoref{app:B} how other categories of quality metrics can be expressed within this framework. \\
In conclusion, many established quality criteria for model explanations can be assessed based on a corresponding $L2$ formulation. Next, we show that enhancing such metrics through convex combinations leads to a well-posed optimization problem.
\paragraph{Deriving optimal weights}
Remember that our goal is to combine multiple attribution results to provably improve desired quality criteria. Suppose $k$ different attribution outcomes $\phi^1(x) \dots \phi^k(x)$ for which we seek optimal convex weight factors $\boldsymbol{\omega}=(\omega_1, \dots, \omega_k)$. First, note that evaluating an aggregated attribution result $\phi^{\omega}=\sum_i \omega_i \phi^i$ via a generalized $L2$ metric $\mathcal{Q}$ reads:
\begin{align*} \mathcal{Q}(\phi^{\omega}) = 
\mathbb{E}\left[ \lVert \gamma_1 \phi^{\omega} - \gamma_2\rVert_2^2 \right] = 
 \mathbb{E}\left[ \lVert (\gamma_1\Phi)\omega - \gamma_2\rVert_2^2 \right]
\end{align*}where $\Phi \in \mathbb{R}^{d\times k}$ describes the matrix of stacked individual attribution results $\Phi = (\phi^{1},\dots, \phi^{k})$. Therefore, optimizing for convex weights $\omega$ reduces to solving:
\begin{align*}
\min_{\omega} \;\; \mathbb{E}\left[ \lVert (\gamma_1\Phi)\omega - \gamma_2\rVert_2^2 \right] \quad \text{s.t.} \quad \omega_i \ge 0, \;\; \sum_{i=1}^k \omega_i = 1
\end{align*}
To ease the notation, we denote the set of feasible aggregation weights by $\Omega = \{ \omega \in \mathbb{R}^k: \omega_i \ge 0, \; \sum_{i=1}^k \omega_i = 1 \}$  and define $\gamma_{2, k:} \in \mathbb{R}^{g \times k}$ as matrix storing $k$ copies of $\gamma_2$ in its columns. By setting $\Gamma := (\gamma_1\Phi -\gamma_{2, k:})\in \mathbb{R}^{g \times k}$, it holds within the set of feasible  weights $\Omega$ that:
\begin{align*}
 \min_{\omega \in \Omega} \;\;  \mathcal{Q}(\phi^{\omega}) \;\;  \Leftrightarrow \;\; \min_{\omega \in \Omega} \;\;   \omega^T \mathbb{E}\left[ \Gamma^T\Gamma \right] \omega
\end{align*}
Hence, searching for the best way to aggregate different attribution outcomes ends up in a constrained quadratic program with convex constraints. This exhibits a global optimum and can efficiently be solved using corresponding numerical solvers \cite{boyd2004convex}. On top of that, this observation also enables us to optimize multiple generalized $L2$ metrics simultaneously as quadratic forms are additive.  Suppose we seek to improve $q$ independent metrics $\mathcal{Q}_1, \dots \mathcal{Q}_q$ with associated parameters $\Gamma_q$ as defined above. Then, for any scalers $\lambda_1, \dots \lambda_q$ we have:
\begin{align*}
    \sum_{j=1}^q \lambda_j \mathcal{Q}_j(\phi^{\omega})
 = \omega^T \mathbb{E}\left[\sum_{j=1}^q \lambda_j  \Gamma_j^T \Gamma_j \right] \omega 
\end{align*}
This implies that searching for convex weights that directly improve multiple metrics prioritized by $\lambda_i$ can also be expressed as a single constrained quadratic program and thus efficiently be solved.\\ In addition to its numerical appeal, optimizing explanations via aggregation in this way also comes with theoretical benefits in the form of provable improvement guarantees and probabilistic optimality bounds. 

\paragraph{Provable improvement through aggregation}
The following theorem allows us to precisely quantify the gain in explanation quality induced via convex aggregation. 
\begin{theorem}
    Let $\phi^{\omega}=\sum_i \omega_i \phi^i$ be the aggregated explanation, then the quality metric of $\phi^{\omega}$ is always at least as good as the weighted metrics of the individual attributions:
    \begin{align*}
    \mathcal{Q}(\phi^{\omega}) = \sum_i \omega_i \mathcal{Q}(\phi^i) - \mathbb{E}_{\gamma_1}\left[ \sum_i \omega_i \lVert \gamma_1 (\phi_i- \phi^{\omega})\rVert_2^2\right]
    \end{align*}
\end{theorem}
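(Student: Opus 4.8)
The plan is to prove the identity \emph{pointwise} for fixed realizations of the random variables $\gamma_1$ and $\gamma_2$, and then integrate using linearity of expectation. The whole statement is essentially a weighted bias--variance (parallel-axis) decomposition, and the single structural fact that makes it work is the convexity constraint $\sum_i \omega_i = 1$: it forces the residual of the aggregated attribution to be a convex combination of the individual residuals. So the first step I would take is to fix $\gamma_1, \gamma_2$ and abbreviate the individual residuals $a_i := \gamma_1 \phi^i - \gamma_2 \in \mathbb{R}^g$. Because $\gamma_1$ acts linearly and $\sum_i \omega_i = 1$, the aggregate residual satisfies
\begin{align*}
\gamma_1 \phi^{\omega} - \gamma_2 = \gamma_1 \Big(\sum_i \omega_i \phi^i\Big) - \gamma_2 = \sum_i \omega_i (\gamma_1 \phi^i - \gamma_2) = \sum_i \omega_i a_i =: \bar a .
\end{align*}
This is the only place where the affine part of the constraint is genuinely used.

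Next I would invoke the elementary weighted identity $\sum_i \omega_i \lVert a_i \rVert_2^2 = \lVert \bar a \rVert_2^2 + \sum_i \omega_i \lVert a_i - \bar a \rVert_2^2$, which follows by expanding $\lVert a_i - \bar a \rVert_2^2 = \lVert a_i \rVert_2^2 - 2\, a_i^{\top} \bar a + \lVert \bar a \rVert_2^2$, multiplying by $\omega_i$, summing, and collapsing the cross term via $\sum_i \omega_i a_i = \bar a$ and $\sum_i \omega_i = 1$. Rearranged, this gives $\lVert \bar a \rVert_2^2 = \sum_i \omega_i \lVert a_i \rVert_2^2 - \sum_i \omega_i \lVert a_i - \bar a \rVert_2^2$. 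I would then identify the deviation term: since $a_i - \bar a = (\gamma_1 \phi^i - \gamma_2) - (\gamma_1 \phi^{\omega} - \gamma_2) = \gamma_1(\phi^i - \phi^{\omega})$, the correction summand is exactly $\lVert \gamma_1(\phi^i - \phi^{\omega}) \rVert_2^2$. Recalling that $\lVert \bar a \rVert_2^2 = \lVert \gamma_1 \phi^{\omega} - \gamma_2 \rVert_2^2$, this yields the pointwise version of the claim.

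Finally I would take $\mathbb{E}_{\gamma_1, \gamma_2}$ on both sides. Linearity, together with $\mathbb{E}_{\gamma_1, \gamma_2}[\lVert \gamma_1 \phi^i - \gamma_2 \rVert_2^2] = \mathcal{Q}(\phi^i)$, produces the stated formula; the correction term contains no $\gamma_2$, so its expectation collapses to $\mathbb{E}_{\gamma_1}$, matching the theorem exactly. There is no real analytic obstacle here---the entire content is the decomposition above. The only subtlety worth flagging is the precise role of the constraints: the \emph{equality} requires $\sum_i \omega_i = 1$ but not $\omega_i \ge 0$, whereas non-negativity is what makes the subtracted term a genuine nonnegative expectation, thereby guaranteeing $\mathcal{Q}(\phi^{\omega}) \le \sum_i \omega_i \mathcal{Q}(\phi^i)$, i.e.\ that aggregation moves the metric in the improving direction.
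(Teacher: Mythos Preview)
Your proposal is correct and follows essentially the same route as the paper's proof: both are the Krogh--Vedelsby error--ambiguity decomposition, obtained by expanding the squared norms and cancelling the cross and constant terms via $\sum_i \omega_i = 1$. The paper carries the expectation throughout the algebra while you work pointwise and integrate at the end, and you additionally make explicit the distinct roles of the affine constraint (for the identity) versus nonnegativity (for the inequality), but the substance is identical.
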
 Note that this result can be related to the error ambiguity decomposition for ensemble learning introduced in \cite{krogh1994neural} and we conduct the proof in \autoref{app:A}. The achievable gain via aggregation $\mathbb{E}_{\gamma_1}\left[ \sum_i \omega_i \lVert \gamma_1 (\phi_i- \phi^{\omega})\rVert_2^2\right] \ge 0$ depends on how diverse the different explanations behave under queries $\gamma_1$ compared to the aggregated one. Moreover, its non-negativity ensures that the quality of the aggregated explanation is at least as good as the equivalently weighted individual attribution qualities since lower values of $\mathcal{Q}$ imply improvements.

\begin{table*}[t!] \centering
\caption{$\text{SENS}_{\text{AVG}}$ ($\text{S}_{\text{AVG}}$) and $\text{SENS}_{\text{MAX}}$ ($\text{S}_{\text{MAX}}$) results for gradient-based attribution methods and different aggregation strategies across several model architectures. Our approach $\text{AGG}_{\textit{robust}}$ consistently outperforms all other techniques followed by $\text{AGG}_{\textit{opt}}$ as second best.}
         \begin{tabular}{c|*{2}{c}|*{2}{c}|*{2}{c}|*{2}{c}|*{2}{c}}
         \toprule
         Feature
         &  \multicolumn{2}{c}{VGG16} 
         &  \multicolumn{2}{c}{AlexNet}
         & \multicolumn{2}{c}{ResNet18} 
        & \multicolumn{2}{c}{MobileNetV2} 
        & \multicolumn{2}{c}{MLPMixer}
        \\ Attribution
       &  $\text{S}_{\textit{AVG}}\downarrow$ & $\text{S}_{\textit{MAX}}\downarrow$ & 
       $\text{S}_{\textit{AVG}}\downarrow$& $\text{S}_{\textit{MAX}}\downarrow$ & $\text{S}_{\textit{AVG}}\downarrow$& $\text{S}_{\textit{MAX}}\downarrow$ &      
       $\text{S}_{\textit{AVG}}\downarrow$& $\text{S}_{\textit{MAX}}\downarrow$& $\text{S}_{\textit{AVG}}\downarrow$& $\text{S}_{\textit{MAX}}\downarrow$ \\
         \midrule
         Saliency &0.994 &1.214&0.800& 0.942&0.964 & 1.163&1.022 &1.265 &1.200 &1.591 \\
         Guided BP &0.515 &0.650&0.430&0.512 &0.483 &0.611 &0.832 &1.060 &- &-\\
         DeepLift &0.893 & 1.090&0.791 &0.932 &0.857 &0.999 &0.894 &1.075 &0.961 &1.165\\
         IntGrad &0.888 &1.065&0.724& 0.854&0.838 &0.991 & 0.910 &1.084 &0.941 &1.141 \\
         InputxGrad &0.988 &1.214&0.807&0.957 &0.956 &1.157 & 1.029&1.292 &1.107 &1,419\\ 
         SmoothGrad  &0.784 &0.913&0.622&0.719 &0.779 &0.898 &0.858  &0.992 &0.643 &0.739\\
         VarGrad  &0.599 &0.949&0.571&0.914 &0.553 &0.829 &0.747 &1.183 &0.554 & 0.910\\
        \midrule
         $\text{AGG}_{\textit{Mean}}$  &0.596 &0.734&0.480&0.583 &0.529 & 0.644& 0.586  &0.724 &0.663 &0.853\\
         $\text{AGG}_{\textit{Var}}$  &0.582 &0.700&0.476&0.574 &0.518 &0.618 & 0.568 &\underline{0.686} & 0.631 & 0.788\\
         $\text{AGG}_{\textit{faith}}$ \textbf{(ours)}  &0.644 &0.833&0.535&0.679 &0.471 &0.578 & 0.792 &1.036 & 0.696 & 0.892\\
         $\text{AGG}_{\textit{opt}}$  \textbf{(ours)} &\underline{0.456} &\underline{0.584} &\underline{0.364} &\underline{0.449} &\underline{ 0.427}&\underline{0.538}&\underline{ 0.536} &0.701 &\underline{0.483} &\underline{0.642}\\
         $\text{AGG}_{\textit{robust}}$  \textbf{(ours)} &\textbf{0.424} &\textbf{0.543}&\textbf{0.349}&\textbf{0.426 }&\textbf{0.410} &\textbf{0.513} &\textbf{0.505} &\textbf{0.654} & \textbf{0.473}&\textbf{0.634}\\
         \bottomrule
     \end{tabular}
     \end{table*}
\paragraph{Generalization bounds for estimated weights }
Obtaining optimal weights usually requires approximating the objective based on a limited set of metric evaluation samples $\{(\gamma_1^{(j)}, \gamma_2^{(j)})\}_{i=1}^m$. Hence, the resulting estimate  $\hat{\omega} = \arg\min_{\omega \in \Omega} \frac{1}{m}\sum_{j=1}^m \lVert \gamma_1^{(j)}\phi^{\omega} - \gamma_2^{(j)} \rVert_2^2$ may deviate from the ideal combination weights as it might not generalize well to unseen metric evaluations. As a consequence, it would be desirable to ensure that the quality improvement with estimated aggregation weights is close enough to the best possible strategy concerning the entire quality metric $\mathcal{Q}$. The following theorem establishes a corresponding result.

\begin{theorem}
Consider a generalized $L2$ metric denoted by $\mathcal{Q}$ with $\max_{\gamma_1} \lVert \gamma_1 \rVert_1 \le c_1$ and $\max_{\gamma_1, \gamma_2} \lVert \gamma_1\phi^i  -\gamma_2 \rVert_2^2 \le c_2$. Additionally, let $\Omega$ represent the set of feasible weights $\omega$ and $\phi^{\omega}=\sum_{i=1}^k \omega_i \phi^i$ denote an aggregated feature attribution result. Suppose $\hat{\omega}$ is an estimate of aggregation weights obtained from $m$ metric evaluation samples given by:
\begin{align*}
    \hat{\omega} =& \arg\min_{\omega \in \Omega}\; \; \frac{1}{m}\sum_{j=1}^m \; \; \lVert \gamma_1^{(j)}\phi^{\omega} - \gamma_2^{(j)} \rVert_2^2 
\end{align*}
Then there exist a constant $C(c_1, c_2) >0 $ depending on $c_1$ and $c_2$ such that with probability of at least $(1-\delta)$:
\begin{align*}
\mathcal{Q}(\phi^{\hat{\omega}}) -    \min_{\omega \in \Omega} \mathcal{Q}(\phi^{\omega})  \le C \; \sqrt{\dfrac{4\log(16k/\delta)}{m}}
\end{align*}
\end{theorem}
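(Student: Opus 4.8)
The plan is to read the statement as a standard excess-risk bound for empirical risk minimization over the simplex $\Omega$, treating $\mathcal{Q}(\phi^\omega)$ as the population risk and $\hat{\mathcal{Q}}(\phi^\omega) := \tfrac{1}{m}\sum_{j=1}^m \lVert \gamma_1^{(j)}\phi^\omega - \gamma_2^{(j)}\rVert_2^2$ as its empirical counterpart. First I would fix a population minimizer $\omega^\star \in \arg\min_{\omega\in\Omega}\mathcal{Q}(\phi^\omega)$ and use the textbook decomposition $\mathcal{Q}(\phi^{\hat\omega}) - \mathcal{Q}(\phi^{\omega^\star}) \le 2\sup_{\omega\in\Omega}\lvert \mathcal{Q}(\phi^\omega) - \hat{\mathcal{Q}}(\phi^\omega)\rvert$, where the optimization gap $\hat{\mathcal{Q}}(\phi^{\hat\omega}) - \hat{\mathcal{Q}}(\phi^{\omega^\star}) \le 0$ drops out because $\hat\omega$ minimizes $\hat{\mathcal{Q}}$. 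This reduces the entire claim to a uniform deviation bound for the generalized $L2$ loss over the feasible weights.

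Next I would record the boundedness that the two hypotheses supply. For every $\omega\in\Omega$, convexity of the weights and the triangle inequality give $\lVert \gamma_1\phi^\omega - \gamma_2\rVert_2 = \lVert \sum_i \omega_i(\gamma_1\phi^i - \gamma_2)\rVert_2 \le \sum_i \omega_i \lVert \gamma_1\phi^i - \gamma_2\rVert_2 \le \sqrt{c_2}$, so the per-sample loss lies in $[0, c_2]$ uniformly in $\omega$. This is exactly what is needed to apply a bounded-differences (McDiarmid) argument to the scalar functional $\Psi := \sup_{\omega\in\Omega}\lvert \mathcal{Q}(\phi^\omega) - \hat{\mathcal{Q}}(\phi^\omega)\rvert$: replacing a single evaluation sample changes $\Psi$ by at most $c_2/m$, yielding $\Psi \le \mathbb{E}[\Psi] + c_2\sqrt{\log(1/\delta)/(2m)}$ with probability at least $1-\delta$.

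It then remains to bound $\mathbb{E}[\Psi]$, which is the technical heart. Here I would symmetrize to pass to the Rademacher complexity of the loss class $\{\,\omega \mapsto \lVert \gamma_1\phi^\omega - \gamma_2\rVert_2^2 : \omega\in\Omega\,\}$ and then exploit the simplex structure. The decisive point is that $\omega\mapsto \gamma_1\phi^\omega - \gamma_2 = \Gamma\omega$ is \emph{affine} in $\omega$ and that $\Omega$ is the probability simplex, whose extreme points are the $k$ unit vectors $e_1,\dots,e_k$; consequently, after peeling the square through Talagrand's contraction inequality (the square being Lipschitz with constant $2\sqrt{c_2}$ on the range $[0,\sqrt{c_2}]$), the residual linear complexity is a maximum over the $k$ vertices and can be controlled by Massart's finite-class lemma. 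This is precisely the mechanism producing a $\sqrt{\log k / m}$ rate, i.e.\ a single $\log k$ inside the bound rather than the $\log k^2$ one would get from a crude union bound over the $k^2$ entries of $\mathbb{E}[\Gamma^T\Gamma]$. In this step $c_2$ governs both the loss range and the contraction constant, while $c_1 = \max_{\gamma_1}\lVert\gamma_1\rVert_1$ controls the magnitude of the linearized per-method features $\gamma_1\phi^i$ entering the vertex maximum.

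The main obstacle I anticipate is exactly this peeling of the squared, vector-valued Euclidean loss: unlike a linear predictor, $\lVert\Gamma\omega\rVert_2^2$ is quadratic in $\omega$ and the norm aggregates $g$ coordinates, so a scalar contraction does not apply verbatim and one must invoke a vector-contraction / composition argument before reducing to the vertices. Once that reduction is in place, the remainder is bookkeeping: adding the symmetrized Rademacher term $\propto \sqrt{\log k/m}$ to the McDiarmid tail $\propto\sqrt{\log(1/\delta)/m}$, merging the two logarithms into a single $\log(16k/\delta)$ via the usual $\sqrt{a}+\sqrt{b}\le\sqrt{2(a+b)}$ and constant-absorption steps, and collecting all remaining absolute constants together with $c_1,c_2$ into the single factor $C(c_1,c_2)$ gives the stated form $C\sqrt{4\log(16k/\delta)/m}$.
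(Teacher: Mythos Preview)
Your proposal is correct and follows essentially the same route as the paper: the paper invokes a packaged ERM excess-risk bound (Theorem 26.5 in Shalev-Shwartz and Ben-David) in place of your explicit McDiarmid $+$ symmetrization decomposition, then bounds the Rademacher complexity by exactly the mechanism you describe---peeling the squared Euclidean loss via a vector-contraction inequality (the paper cites Maurer, 2016, which is precisely the tool you anticipate needing), followed by H\"older on the simplex to reduce to the $k$ vertices and Massart's lemma, with $c_1$ entering through $\lVert\gamma_1\Phi\rVert_\infty\le c_1$ and $c_2$ through the Lipschitz constant $2\sqrt{c_2}$. The final merging of the $\sqrt{\log k/m}$ and $\sqrt{\log(1/\delta)/m}$ terms via $\sqrt{a}+\sqrt{b}\le\sqrt{2(a+b)}$ is also identical.
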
 To prove this statement, we develop appropriate bounds on the Rademacher complexity of vector-valued functions based on a concentration result from \cite{maurer2016vector} and the specific properties of generalized $L2$ metrics over convex combinations of normalized feature attribution results. The full derivation and the precise expression for the constant $C$ are given in \autoref{app:A}. Intuitively, theorem 4.3 guarantees that the maximum potential deviation of our aggregation approach from the optimal improvement can be bounded with high probability. Moreover, the worst-case performance gap diminishes with order $\mathcal{O}(1/\sqrt{m})$ for increasing number of metric evaluation samples $m$.

\paragraph{Optimal aggregation for desired improvements} Based on the generalized framework above we propose different aggregation strategies to intentionally enhance specific properties of feature attribution results. To explicitly enhance explanation robustness we obtain combination weights $\omega^{\textit{robust}}$ by optimizing Average-Sensitivity as related $L2$ metric.
\begin{align*}
    \text{AGG}_{\textit{robust}}: \quad  
    \omega^{\textit{robust}} = \arg\min_{\omega \in \Omega}\;\text{SENS}_{\textit{AVG}}(\phi^{\omega})
\end{align*}
Equivalently, to optimize for faithfulness we can compute $\omega^{\textit{faith}}$ by considering Infidelity as underlying objective:
\begin{align*}
 \text{AGG}_{\textit{faith}}: \quad \omega^{\textit{faith}} = \arg\min_{\omega \in \Omega}\;\text{INFD}(\phi^{\omega})
\end{align*}
As a default strategy to increase explanation quality via aggregation, we further propose improving both metrics simultaneously. We coin this approach $\text{AGG}_{\textit{opt}}$ optimizing for better feature attributions more generically:
\begin{align*}
\text{AGG}_{\textit{opt}}: \quad \omega^{\textit{opt}} = \arg\min_{\omega \in \Omega}\;\text{INFD}(\phi^{\omega}) + \text{SENS}_{\textit{AVG}}(\phi^{\omega})
\end{align*}
\section{Experiments} We conducted a multifaceted empirical evaluation to investigate the capacities of our proposed aggregation strategies to intentionally enhance desired properties of explanations. All our aggregation strategies are optimized using only a small amount of metric evaluation samples to approximate the underlying metric ($m_{\textit{agg}}=50$). We explicitly test how well the improvements generalize to a larger sample of novel metric evaluations ($m_{\textit{eval}}=200$) and if they transfer to alternative quality measures. The findings presented in this section are based on the ImageNet \texttt{ILSVRC2012} dataset and concrete implementation details are documented in \autoref{app:C}. Accompanying source code is released at \texttt{https://github.com/thomdeck/aggopt}. 

\subsection{Quantitative evaluation of quality improvements}
\paragraph{Increasing robustness via $\text{AGG}_{\textit{robust}}$ and $\text{AGG}_{\textit{opt}}$}
We examine to which extent our aggregation approach can mitigate typically encountered instabilities of gradient-based explanations on convolutional models. For this purpose, we consider seven corresponding attribution techniques as well as four different ways of combining them including the two simple baselines $\text{AGG}_{\textit{Mean}}$ and $\text{AGG}_{\textit{Var}}$ and our proposed strategies $\text{AGG}_{\textit{robust}}$ and $\text{AGG}_{\textit{opt}}$. All resulting explanations are computed for 500 random samples from ImageNet across five popular computer vision models and we evaluated their robustness based on the metrics $\text{SENS}_{\text{AVG}}$ and $\text{SENS}_{\text{MAX}}$. The corresponding results in Table 1 indicate that our approach $\text{AGG}_{\textit{robust}}$ consistently outperforms all individual attribution methods as well as all other aggregations followed by $\text{AGG}_{\textit{opt}}$, which is almost always second best. Remember that $\text{AGG}_{\textit{robust}}$  directly optimizes for $\text{SENS}_{\textit{AVG}}$ using a small number of metric evaluations. Thus, the generalization performance for this metric, now evaluated with a higher number of unseen metric evaluation samples, is in line with our theoretical framework. On top of that, the additional superiority in terms of the alternative metric $\text{SENS}_{\textit{MAX}}$ demonstrates that our approach also improves attribution robustness in general.
\begin{table*}[t!] \centering
\caption{$\text{INFD}$ and $\text{FCOR}$ results for different attribution methods and aggregation strategies across several model architectures. Our approach $\text{AGG}_{\textit{faith}}$ consistently outperforms all other techniques and $\text{AGG}_{\textit{opt}}$ is either second best or comparable.}
\resizebox{\textwidth}{!}{
         \begin{tabular}{c|*{2}{c}|*{2}{c}|*{2}{c}|*{2}{c}|*{2}{c}}
         \toprule
         Feature
         &  \multicolumn{2}{c}{DenseNet121} 
         & \multicolumn{2}{c}{ResNet18} 
         & \multicolumn{2}{c}{MobileNetV2} 
        & \multicolumn{2}{c}{DeiT} 
        & \multicolumn{2}{c}{SwinT}
        \\
        Attribution
       &  $\text{INFD}\downarrow$ & $\text{FCOR}\uparrow$ 
       &   $\text{INFD}\downarrow$& $\text{FCOR}\uparrow$ &   $\text{INFD}\downarrow$& $\text{FCOR}\uparrow$ &      
       $\text{INFD}\downarrow$& $\text{FCOR}\uparrow$& $\text{INFD}\downarrow$& $\text{FCOR}\uparrow$ \\
         \midrule
        GradSHAP &2.846 & 0.303&3.196 &0.369 &0.509 &0.261 &0.512& 0.120&0.446 &0.094 \\
         IntGrad  &2.913 &0.263 &3.371 &0.312& 0.522& 0.223&0.507 &0.122 &0.444 &0.094 \\
        InputxGrad &3.097 &0.205 &3.587 &0.259& 0.538& 0.192&0.516 &0.113 &0.446 &0.079\\ 
         SmoothGrad  &2.604 &\underline{0.388} &2.916 &0.444&0.494 &0.296 &0.392 & 0.297& 0.367&0.208\\
        GradCAM &2.646 &0.388 &2.922 &\underline{0.459}&0.478 &\underline{0.319} & 0.385& 0.311& 0.373&0.227\\
        GradCAM++ &2.687 &0.376 &2.988 & 0.438 &0.484 &0.306 &0.487 & 0.213& 0.394& 0.187\\
         EigenCAM &3.044 &0.251 &3.381 &0.347 &0.538 &0.231 &0.568 &0.058 &0.439 &0.107\\
        \midrule
         $\text{AGG}_{\textit{Mean}}$  &2.661 &0.370 &2.928 & 0.444& 0.479&0.293 & 0.449& 0.237&0.377 &0.213\\
         $\text{AGG}_{\textit{Var}}$ &2.675 &0.368 &2.945 &0.442 &0.481 &0.294 &0.447 &0.238&0.381 &0.212\\
         $\text{AGG}_{\textit{robust}}$ \textbf{(ours)} &2.678 &0.339 &2.956 &0.415 &0.483 &0.282 &0.407 &0.288 &0.366 &0.224\\
         $\text{AGG}_{\textit{opt}}$ \textbf{(ours)} &\underline{2.514} &0.380&\underline{2.729} &0.458 &\underline{0.467} & 0.304& \underline{0.339} & \underline{0.368}& \underline{0.341} &\underline{0.265}\\
         $\text{AGG}_{\textit{faith}}$  \textbf{(ours)}  &\textbf{2.390} &\textbf{0.406} &\textbf{2.595} &\textbf{0.481} &\textbf{0.443} &\textbf{0.325} &\textbf{0.335} & \textbf{0.372}&\textbf{0.335} &\textbf{0.275}\\
         \bottomrule
     \end{tabular}
   }  
\end{table*}
\paragraph{Increasing faithfulness via $\text{AGG}_{\textit{faith}}$ and $\text{AGG}_{\textit{opt}}$}
Similar to the robustness experiments, we evaluate the metrics Infidelity $\text{INFD}$ and Faithfulness correlation $\text{FCOR}$ to validate the capabilities of our approach for improving attribution fidelity. For both metrics, features are perturbed by replacing randomly selected pixels with the corresponding values of a blurred image version. More details regarding the precise design of these metrics and the underlying perturbations are specified in \hyperref[app:C]{Appendix C1}. We again considered five popular model architectures including two transformer-based ones and selected seven applicable attribution methods. Table 2 summarizes the results and shows that our dedicated aggregation approach $\text{AGG}_{\textit{faith}}$ performs best in every scenario, followed again by $\text{AGG}_{\textit{opt}}$ being either second best or comparable. The consistent superiority for $\text{INFD}$ again supports our theory that optimizing aggregation using only a small amount of evaluation samples is enough to attain sustainable quality enhancement generalizing to novel out-of-sample metric evaluations unseen during optimization. Interestingly, the additional improvement concerning $\text{FCOR}$ seems to be particularly strong for the two transformer-based models. \\
\paragraph{Improving additional quality metrics}
\begin{table*}[h]
    \centering
    \caption{Additional quality metric results on a ResNet18 for different attribution methods and aggregation strategies. Across all evaluations, one of our proposed approaches performs best, and $\text{AGG}_{\textit{opt}}$ is consistently at least second best or comparable.}
    \resizebox{\textwidth}{!}{
    \begin{tabular}{c|*{3}{c}|*{2}{c}|*{2}{c}|*{3}{c}}
        \toprule
        Feature & \multicolumn{3}{c}{Stability $\downarrow$} & \multicolumn{2}{c}{Infidelity $\downarrow$}& \multicolumn{2}{c}{Faith. Corr. $\uparrow$} & \multicolumn{3}{c}{ROAD $\downarrow$} \\ 
        Attribution& RIS  & RRS & ROS  & $\text{INFD}_{0}$ & $\text{INFD}_{\bar{x}} $ & $\text{FCOR}_{0}$ & $\text{FCOR}_{\bar{x}}$ &$\text{MoRF}_{10} $& $\text{MoRF}_{20} $& $\text{MoRF}_{30} $ \\
        \midrule
        Deeplift & 8.10 & 5.26 & 8.92 & 3.59 & 3.64  & 0.33 & 0.32 & -1.10 & -2.02 & -3.02\\
        VarGrad & 7.22 & 4.73 & 7.86 & 3.18 & 3.12  & 0.42 & 0.42 & -2.37 & -4.69 & -6.79\\
        GuidedBP & \underline{3.61} & \underline{2.31} & \underline{3.91} & 3.21 & 3.10  & 0.44 & 0.45 & -3.09 & -4.86 & -6.31 \\
        IntGrad & 7.69 & 4.88 & 8.28 & 3.14 & 3.12  & 0.43 & 0.43 & -0.85 & -1.79 & -2.82\\
        SmoothGrad & 8.18 & 5.43 & 9.32 & 3.61 & 3.61 & 0.33 & 0.32 & -1.57 & -2.72 & -3.84\\
        InputxGrad & 9.47 & 6.08 & 10.28& 3.84 & 3.90  & 0.27 & 0.27& -0.63 & -1.34 & -2.24 \\
        Saliency & 9.13 & 5.89 & 9.99& 3.50 & 3.52  & 0.34 & 0.36& -0.58 & -1.24 & -2.04 \\
        \midrule
        $\text{AGG}_{\textit{Mean}}$ & 5.33 & 3.41 & 5.75 & 3.16 & 3.23  & 0.43 & 0.41 & -2.21 & -3.98 & -5.68\\
        $\text{AGG}_{\textit{Var}}$ & 5.19 & 3.33 & 5.61& 3.18 & 3.22 &  0.42 & 0.42& -2.21 & -3.98 & -5.69 \\
        $\text{AGG}_{\textit{faith}}$ \textbf{(ours)} & 5.73 & 3.70 & 6.25& \textbf{2.82} & \textbf{2.78} & \underline{0.47} & \textbf{0.48}& -2.66 & -4.50 & -6.12 \\
        $\text{AGG}_{\textit{opt}}$ \textbf{(ours)}& 3.62 & 2.33 & 3.96 & \underline{2.83} & \underline{2.80} & \textbf{0.49} & \underline{0.47}& \underline{-3.30} & \underline{-5.41} & \underline{-7.14} \\
        $\text{AGG}_{\textit{robust}}$ \textbf{(ours)} & \textbf{3.27} & \textbf{2.09} & \textbf{3.55} & 2.97 & 2.93 & 0.46 & 0.46 & \textbf{-3.36} & \textbf{-5.48} & \textbf{-7.17}\\
        \bottomrule
    \end{tabular}
    }
    \label{tab:comparison}
\end{table*}
To further substantiate the benefits of our proposed methods, we also investigated how well the improvements generalize to additional quality metrics that express alternative notions of robustness and faithfulness. Stability metrics \cite{agarwal2022rethinking} offer a complementary approach to evaluating the robustness of explanations by quantifying the sensitivity of attribution results relative to changes in various quantities of interest. More specifically, Relative Input Stability (RIS) assesses how explanations vary relative to input changes, Relative Representation Stability (RRS) examines variations relative to changes in the model's internal representations, and Relative Output Stability (ROS) evaluates sensitivity relative to changes in output prediction probabilities. The corresponding results in Table 3 over $500$ samples on a ResNet18 show that our dedicated approach $\text{AGG}_{\textit{robust}}$ also significantly improves all stability metrics. During the experiments above, we used blurring as base perturbation when optimizing aggregation for faithfulness and when evaluating Infidelity and Faithfulness correlation. To check how well the improvements generalize to slightly altering notions of faithfulness, we computed variations of these metrics based on alternative corruptions \cite{hedstrom2023quantus} such as pixel value replacement with zeros ($\text{INFD}_{0}$ and $\text{FCOR}_{0}$) and the image mean ($\text{INFD}_{\bar{x}}$ and $\text{FCOR}_{\bar{x}}$). Even though we explicitly kept blurring as perturbation during weight optimization for $\text{AGG}_{\textit{opt}}$ and $\text{AGG}_{\textit{faith}}$ the resulting explanations still perform best when evaluated with different corruptions. Finally, we computed the Remove and Debias (ROAD) metric \cite{rong2022consistent} that assesses the fidelity of explanations by removing the top features identified by an attribution method and estimating the subsequent decrease in prediction confidence. In Table 3 we report the outcomes of ablating the most relevant $p=10, 20, 30$ percent of pixels in an image ($\text{MoRF}_{p}$) while additional percentiles are deferred to \hyperref[app:D1]{Appendix D1}. All results consistently indicate that also this metric can be improved by relying on one of our proposed methods. \\
 
Overall, the results of all experiments manifest that our aggregation techniques achieve desired and generalizing improvements in explanation quality in line with our theoretical analysis. They also suggest that the aggregation strategy $\text{AGG}_{\textit{opt}}$, which optimizes for faithfulness and robustness simultaneously, is an effective default approach to attain better explainability via aggregation when both criteria matter. All these findings are confirmed on four additional datasets in \hyperref[app:D2]{Appendix D2}. Furthermore, we provide supplementary ablation studies in \hyperref[app:D4]{Appendix D4}, which imply beneficial effects resulting from an increasing number of combined explanations and greater method diversity.  

\begin{figure*}[ht!]
    \centering
\includegraphics[width=\textwidth]{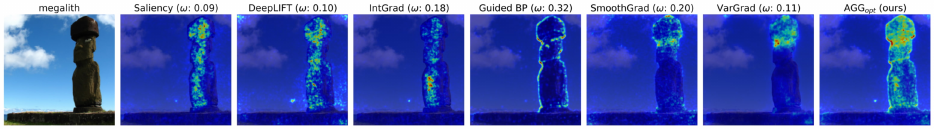}
\includegraphics[width=\textwidth]{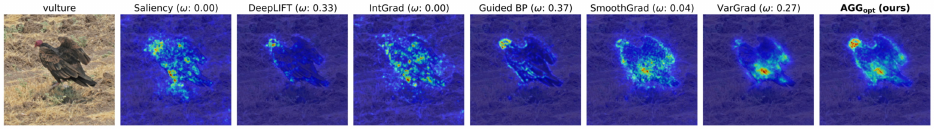}
\includegraphics[width=\textwidth]{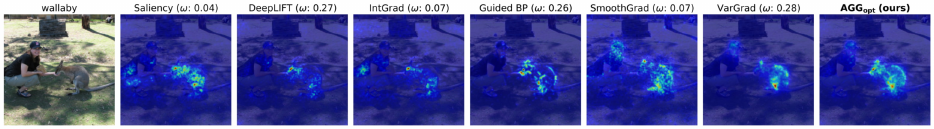}
\includegraphics[width=\textwidth]{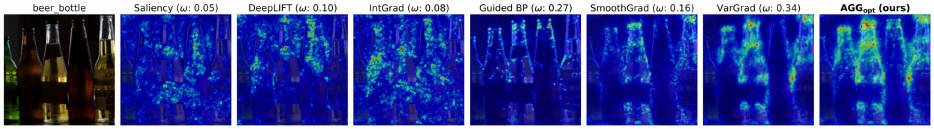}
\includegraphics[width=\textwidth]{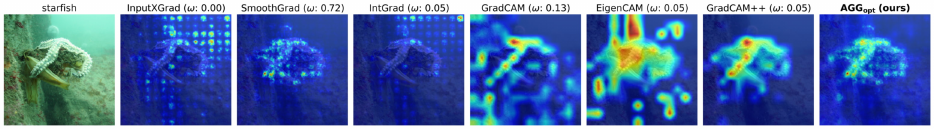}
\includegraphics[width=\textwidth]{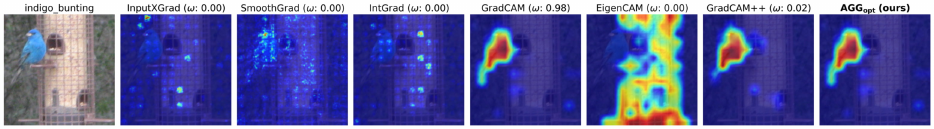}
\includegraphics[trim={0 0.8cm 0 0},width=\textwidth]{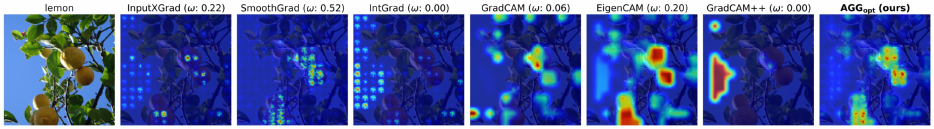}
    \caption{Individual outcomes of different feature attribution methods as well as our approach $\mathbf{\text{AGG}_{\textit{opt}}}$ \textbf{(right column)} for seven images based on VGG16 (row 1-5) and DeiT (row 6-8). In addition to the quantitative improvements established in section 5.1 for robustness and faithfulness, our aggregation strategy also produces visually more intuitive and convincing explanations. It succeeds in enhancing the attribution results by combining several valid perspectives to complement each other (e.g rows 1 and 2) and by automatically discarding seemingly deteriorated explanations (e.g. rows 7 and 8).}
    \label{results:fig}
\end{figure*} 
\begin{figure*}[h!]
    \centering
\includegraphics[trim={0 0.8cm 0 0},width=\textwidth]{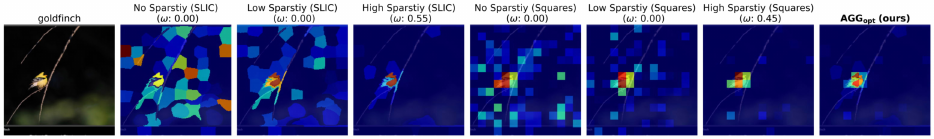}
    \caption{Individual attribution results of different LIME variants varying by superpixel structure and sparsity regularization on VGG16. The object to be classified is rather small and $\text{AGG}_{\textit{opt}}$ automatically combines only the sparsest explanations to enhance the explanation.}
    \label{lime:fig}
\end{figure*}
\subsection{Understanding how optimized aggregation helps}
In Figure 2 we display seven concrete examples with corresponding individual and aggregated attribution results to gain further insights into how optimized aggregation succeeds in improving explanations. Notice that our generic aggregation approach $\text{AGG}_{\textit{opt}}$ enhances feature attributions essentially via two mechanisms. Particularly in the first two images, all considered methods highlight intuitively relevant but diverging regions causing ambiguity about which pixels truly matter. $\text{AGG}_{\textit{opt}}$ improves the explanations by combining all perspectives to complement each other, which also leads to a visually more convincing explanation. In the last two images, some individual methods seem to fail by producing rather deteriorated results. For such instances $\text{AGG}_{\textit{opt}}$ performs automatic method selections intrinsically and aggregates only valid attribution outcomes to form an enhanced explanation that is more representative of the underlying model. Consulting the distribution of aggregation weights retrieved during both experiments in section 5.1 also reveals that the optimal weighting is highly model-dependent and even exhibits strong variability across samples. In Figure 4 we present corresponding boxplots for the weights of $\text{AGG}_{\textit{opt}}$ obtained for the two considered sets of attribution techniques. For all methods, the allocated weights during the experiments vary substantially among samples covering oftentimes even the entire possible range between 0 and 1.  Moreover, the distribution of allocated weights does not transfer across models as for different architectures other methods are most favored. This provides further evidence that a single attribution method seems unable to explain every prediction for all model architectures faithfully and supports our approach to rather aggregate them in an optimizing manner. 
\begin{figure*}[ht!]
 \begin{minipage}{0.68\textwidth}
    \includegraphics[width=\linewidth]{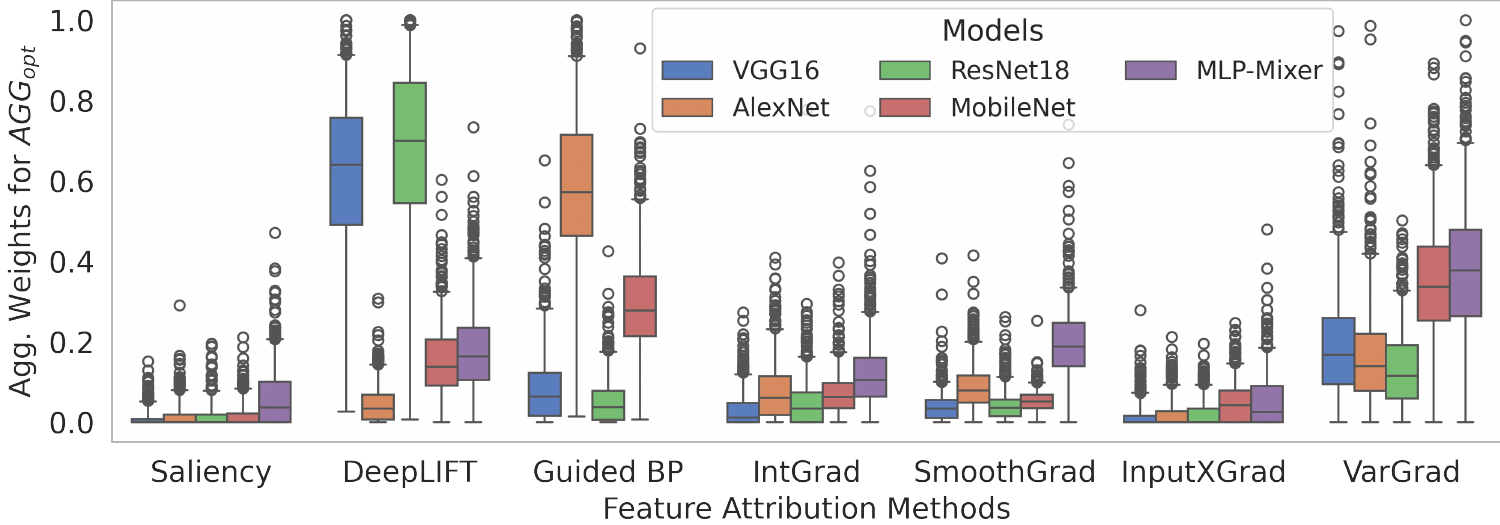}
    \vspace{0.2cm}
    \includegraphics[trim={0 1.5cm 0 0}, width=\linewidth]{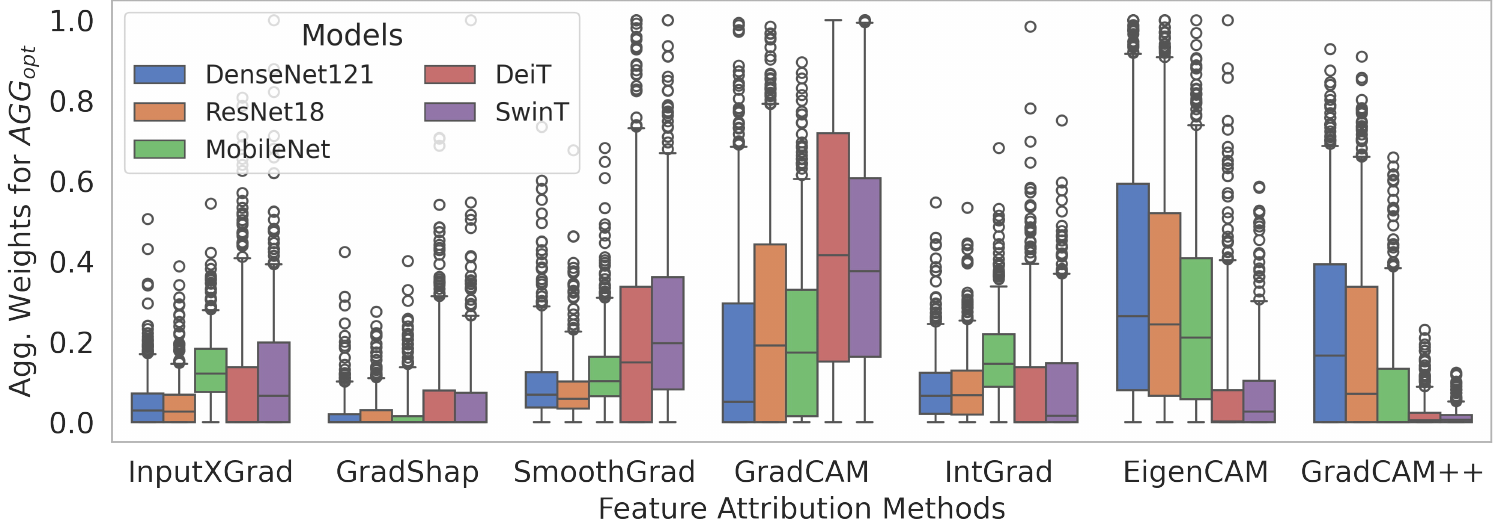}
    \caption{Boxplots of aggregation weights obtained by $\text{AGG}_{\textit{opt}}$ for the two considered sets of attribution methods during the evaluations in section 5.1 for robustness (top) and faithfulness (bottom) based on 500 samples. For each method, the allocated weight differs substantially among samples as most distributions cover almost the entire range between 0 and 1. There is also high variability across models indicating that a single method alone is unable to provide a reliable explanation for every prediction consistently.}
\end{minipage} \hfill
\begin{minipage}{0.25\textwidth}
 \includegraphics[width=\linewidth]{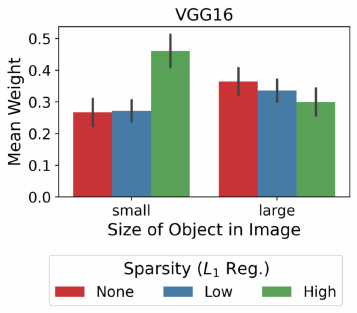}
    \includegraphics[trim={0 0.7cm 0 0}, width=\linewidth]{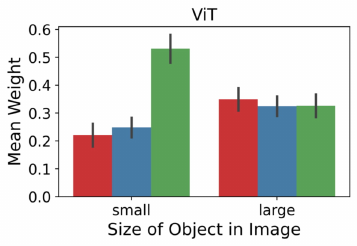}
    \caption{Average aggregation weights obtained by $\text{AGG}_{\textit{opt}}$ while optimizing the results from different versions of LIME on VGG16 (top) and ViT (bottom) including 95\% confidence intervals as error bars. For smaller objects, significantly more weight is put on higher sparsity regularization.}
\end{minipage}
\end{figure*}

\subsection{Enhancing individual methods via aggregation}
Many feature attribution methods rely on several hyperparameters and their concrete choice can greatly impact the resulting explanation \cite{bansal2020sam}. A popular example is LIME \cite{ribeiro2016should}, which derives feature importance by fitting a linear surrogate model to approximate the model behavior in the vicinity of an input. When applied to image data, LIME typically computes attributions at the level of superpixels and incorporates an $L1$ regularization to enforce a certain level of sparsity via LASSO \cite{ribeiro2016should, garreau2021does}. However, the requirement of fixing the regularization strength in advance might result in inferior explanations in cases where the number of important features does not match the enforced level of sparsity. To evaluate if optimized aggregation can effectively mediate this critical hyperparameter choice, we conducted the following experiment. We aggregate six different versions of LIME covering two different superpixel algorithms (SLIC \cite{achanta2012slic} and squared patches), each exhibiting either no, low, or high level of sparsity regularization. Using the available bounding box information for ImageNet, we distinguish between images where the object to be classified is particularly small ($<20\%$ of the total picture) or rather larger ($>60\%$). We randomly selected 200 images per object size and Figure 5 displays the average weights allocated by $\text{AGG}_{\textit{opt}}$ to different variants of LIME grouped by sparsity regularization. For both considered models, significantly more weight is put on the explanations resulting from higher sparsity regularization when the object to be classified is small compared to larger ones. This is also exemplified in Figure 3, where the prediction for an image containing a rather small object is explained. $\text{AGG}_{\textit{opt}}$ optimizes the results of all considered LIME variants by aggregating only the two sparsest attributions matching the location of interest.\\
This demonstrates that our proposed aggregation approach also boosts the performance of individual attribution techniques when combining different versions of the same method.
 
\section{Discussion and Conclusion}
In this work, we provided the first theoretically grounded approach to optimally leverage distinct feature attribution results for improving explanations of opaque models. A downside of our technique is the higher inference time especially compared to relying on a single method only. However, since the main purpose of explainability techniques is to reliably increase the transparency of particularly critical decisions we argue that the added computational costs are minor (see \hyperref[app:D3]{Appendix D3}) and well justified for the sake of provably better results. Another limiting aspect is the reliance on existing feature attribution methods and their validity as any uniform weakness might also compromise the aggregation. Hence, we recommend considering a sufficiently diverse set of individual techniques and we provide corresponding ablation studies in \hyperref[app:D4]{Appendix D4}. \\
A natural extension of our work is to consider more sophisticated strategies beyond convex weighting to perform aggregation, such as voting algorithms or other ensemble methods. Furthermore, we specifically focused on combining fairly comparable feature attribution techniques. Future work could also explore how to best incorporate supplementary insights derived from concept-based \cite{hitzler2022human}, optimization-based \cite{dabkowski2017real, fong2019understanding, jethani2021have} or counterfactual explanations \cite{guidotti2022counterfactual} to even further enhance explainability with aggregation. 
\section*{Acknowledgements}
We acknowledge the support from the German Federal Ministry for
Economic Affairs and Climate Action (BMWK) via grant agreement 19I21039A.
\section*{Impact Statement}
This paper presents work that aims at enhancing the transparency of predictions made by opaque machine learning models. Enabling more reliable explanations can be associated with a variety of societal benefits by promoting trust, accountability, and compliance with regulatory and ethical standards. Hence, any research dedicated to explainable machine learning has the potential to contribute to more responsible development and deployment of machine learning-based technology. 
\bibliography{main}

\begin{thebibliography}{74}
\providecommand{\natexlab}[1]{#1}
\providecommand{\url}[1]{\texttt{#1}}
\expandafter\ifx\csname urlstyle\endcsname\relax
  \providecommand{\doi}[1]{doi: #1}\else
  \providecommand{\doi}{doi: \begingroup \urlstyle{rm}\Url}\fi

\bibitem[Achanta et~al.(2012)Achanta, Shaji, Smith, Lucchi, Fua, and
  S{\"u}sstrunk]{achanta2012slic}
Achanta, R., Shaji, A., Smith, K., Lucchi, A., Fua, P., and S{\"u}sstrunk, S.
\newblock Slic superpixels compared to state-of-the-art superpixel methods.
\newblock \emph{IEEE transactions on pattern analysis and machine
  intelligence}, 34\penalty0 (11):\penalty0 2274--2282, 2012.

\bibitem[Adebayo et~al.(2018)Adebayo, Gilmer, Muelly, Goodfellow, Hardt, and
  Kim]{adebayo2018sanity}
Adebayo, J., Gilmer, J., Muelly, M., Goodfellow, I., Hardt, M., and Kim, B.
\newblock Sanity checks for saliency maps.
\newblock \emph{Advances in neural information processing systems}, 31, 2018.

\bibitem[Adebayo et~al.(2020)Adebayo, Muelly, Liccardi, and
  Kim]{adebayo2020debugging}
Adebayo, J., Muelly, M., Liccardi, I., and Kim, B.
\newblock Debugging tests for model explanations.
\newblock \emph{Advances in Neural Information Processing Systems},
  33:\penalty0 700--712, 2020.

\bibitem[Agarwal et~al.(2022{\natexlab{a}})Agarwal, Johnson, Pawelczyk,
  Krishna, Saxena, Zitnik, and Lakkaraju]{agarwal2022rethinking}
Agarwal, C., Johnson, N., Pawelczyk, M., Krishna, S., Saxena, E., Zitnik, M.,
  and Lakkaraju, H.
\newblock Rethinking stability for attribution-based explanations.
\newblock \emph{arXiv preprint arXiv:2203.06877}, 2022{\natexlab{a}}.

\bibitem[Agarwal et~al.(2022{\natexlab{b}})Agarwal, Krishna, Saxena, Pawelczyk,
  Johnson, Puri, Zitnik, and Lakkaraju]{agarwal2022openxai}
Agarwal, C., Krishna, S., Saxena, E., Pawelczyk, M., Johnson, N., Puri, I.,
  Zitnik, M., and Lakkaraju, H.
\newblock Open{XAI}: Towards a transparent evaluation of model explanations.
\newblock In \emph{Thirty-sixth Conference on Neural Information Processing
  Systems Datasets and Benchmarks Track}, 2022{\natexlab{b}}.

\bibitem[Alvarez-Melis \& Jaakkola(2018)Alvarez-Melis and
  Jaakkola]{alvarez2018robustness}
Alvarez-Melis, D. and Jaakkola, T.~S.
\newblock On the robustness of interpretability methods.
\newblock \emph{arXiv preprint arXiv:1806.08049}, 2018.

\bibitem[Arras et~al.(2022)Arras, Osman, and Samek]{ARRAS202214}
Arras, L., Osman, A., and Samek, W.
\newblock Clevr-xai: A benchmark dataset for the ground truth evaluation of
  neural network explanations.
\newblock \emph{Information Fusion}, 81:\penalty0 14--40, 2022.
\newblock ISSN 1566-2535.

\bibitem[Baniecki \& Biecek(2023)Baniecki and Biecek]{baniecki2023adversarial}
Baniecki, H. and Biecek, P.
\newblock Adversarial attacks and defenses in explainable artificial
  intelligence: A survey.
\newblock \emph{arXiv preprint arXiv:2306.06123}, 2023.

\bibitem[Bansal et~al.(2020)Bansal, Agarwal, and Nguyen]{bansal2020sam}
Bansal, N., Agarwal, C., and Nguyen, A.
\newblock Sam: The sensitivity of attribution methods to hyperparameters.
\newblock In \emph{Proceedings of the ieee/cvf conference on computer vision
  and pattern recognition}, pp.\  8673--8683, 2020.

\bibitem[Bhatt et~al.(2021)Bhatt, Weller, and Moura]{bhatt2021evaluating}
Bhatt, U., Weller, A., and Moura, J.~M.
\newblock Evaluating and aggregating feature-based model explanations.
\newblock In \emph{Proceedings of the Twenty-Ninth International Conference on
  International Joint Conferences on Artificial Intelligence}, pp.\
  3016--3022, 2021.

\bibitem[Bilodeau et~al.(2024)Bilodeau, Jaques, Koh, and
  Kim]{bilodeau2022impossibility}
Bilodeau, B., Jaques, N., Koh, P.~W., and Kim, B.
\newblock Impossibility theorems for feature attribution.
\newblock \emph{Proceedings of the National Academy of Sciences}, 121\penalty0
  (2):\penalty0 e2304406120, 2024.

\bibitem[Boyd \& Vandenberghe(2004)Boyd and Vandenberghe]{boyd2004convex}
Boyd, S.~P. and Vandenberghe, L.
\newblock \emph{Convex optimization}.
\newblock Cambridge university press, 2004.

\bibitem[Bykov et~al.(2022)Bykov, Hedstr{\"o}m, Nakajima, and
  H{\"o}hne]{bykov2022noisegrad}
Bykov, K., Hedstr{\"o}m, A., Nakajima, S., and H{\"o}hne, M. M.-C.
\newblock Noisegrad—enhancing explanations by introducing stochasticity to
  model weights.
\newblock In \emph{Proceedings of the AAAI Conference on Artificial
  Intelligence}, volume~36, pp.\  6132--6140, 2022.

\bibitem[Castro et~al.(2009)Castro, G{\'o}mez, and
  Tejada]{castro2009polynomial}
Castro, J., G{\'o}mez, D., and Tejada, J.
\newblock Polynomial calculation of the shapley value based on sampling.
\newblock \emph{Computers \& Operations Research}, 36\penalty0 (5):\penalty0
  1726--1730, 2009.

\bibitem[Chalasani et~al.(2020)Chalasani, Chen, Chowdhury, Wu, and
  Jha]{chalasani2020concise}
Chalasani, P., Chen, J., Chowdhury, A.~R., Wu, X., and Jha, S.
\newblock Concise explanations of neural networks using adversarial training.
\newblock In \emph{International Conference on Machine Learning}, pp.\
  1383--1391. PMLR, 2020.

\bibitem[Chattopadhay et~al.(2018)Chattopadhay, Sarkar, Howlader, and
  Balasubramanian]{chattopadhay2018grad}
Chattopadhay, A., Sarkar, A., Howlader, P., and Balasubramanian, V.~N.
\newblock Grad-cam++: Generalized gradient-based visual explanations for deep
  convolutional networks.
\newblock In \emph{2018 IEEE winter conference on applications of computer
  vision (WACV)}, pp.\  839--847. IEEE, 2018.

\bibitem[Dabkowski \& Gal(2017)Dabkowski and Gal]{dabkowski2017real}
Dabkowski, P. and Gal, Y.
\newblock Real time image saliency for black box classifiers.
\newblock \emph{Advances in neural information processing systems}, 30, 2017.

\bibitem[Decker et~al.(2023)Decker, Lebacher, and Tresp]{decker2023does}
Decker, T., Lebacher, M., and Tresp, V.
\newblock Does your model think like an engineer? explainable ai for bearing
  fault detection with deep learning.
\newblock In \emph{ICASSP 2023-2023 IEEE International Conference on Acoustics,
  Speech and Signal Processing (ICASSP)}, pp.\  1--5. IEEE, 2023.

\bibitem[Diamond \& Boyd(2016)Diamond and Boyd]{diamond2016cvxpy}
Diamond, S. and Boyd, S.
\newblock {CVXPY}: {A} {P}ython-embedded modeling language for convex
  optimization.
\newblock \emph{Journal of Machine Learning Research}, 17\penalty0
  (83):\penalty0 1--5, 2016.

\bibitem[Dicker(2014)]{dicker2014sparsity}
Dicker, L.
\newblock Sparsity and the truncated $ l^2$-norm.
\newblock In \emph{Artificial Intelligence and Statistics}, pp.\  159--166.
  PMLR, 2014.

\bibitem[Dombrowski et~al.(2019)Dombrowski, Alber, Anders, Ackermann,
  M{\"u}ller, and Kessel]{dombrowski2019explanations}
Dombrowski, A.-K., Alber, M., Anders, C., Ackermann, M., M{\"u}ller, K.-R., and
  Kessel, P.
\newblock Explanations can be manipulated and geometry is to blame.
\newblock \emph{Advances in neural information processing systems}, 32, 2019.

\bibitem[Dosovitskiy et~al.(2021)Dosovitskiy, Beyer, Kolesnikov, Weissenborn,
  Zhai, Unterthiner, Dehghani, Minderer, Heigold, Gelly, Uszkoreit, and
  Houlsby]{dosovitskiy2020image}
Dosovitskiy, A., Beyer, L., Kolesnikov, A., Weissenborn, D., Zhai, X.,
  Unterthiner, T., Dehghani, M., Minderer, M., Heigold, G., Gelly, S.,
  Uszkoreit, J., and Houlsby, N.
\newblock An image is worth 16x16 words: Transformers for image recognition at
  scale.
\newblock In \emph{International Conference on Learning Representations}, 2021.

\bibitem[Erion et~al.(2021)Erion, Janizek, Sturmfels, Lundberg, and
  Lee]{erion2021improving}
Erion, G., Janizek, J.~D., Sturmfels, P., Lundberg, S.~M., and Lee, S.-I.
\newblock Improving performance of deep learning models with axiomatic
  attribution priors and expected gradients.
\newblock \emph{Nature machine intelligence}, 3\penalty0 (7):\penalty0
  620--631, 2021.

\bibitem[Fokkema et~al.(2023)Fokkema, de~Heide, and van
  Erven]{fokkema2022attribution}
Fokkema, H., de~Heide, R., and van Erven, T.
\newblock Attribution-based explanations that provide recourse cannot be
  robust.
\newblock \emph{Journal of Machine Learning Research}, 24\penalty0
  (360):\penalty0 1--37, 2023.

\bibitem[Fong et~al.(2019)Fong, Patrick, and Vedaldi]{fong2019understanding}
Fong, R., Patrick, M., and Vedaldi, A.
\newblock Understanding deep networks via extremal perturbations and smooth
  masks.
\newblock In \emph{Proceedings of the IEEE/CVF international conference on
  computer vision}, pp.\  2950--2958, 2019.

\bibitem[Garreau \& Mardaoui(2021)Garreau and Mardaoui]{garreau2021does}
Garreau, D. and Mardaoui, D.
\newblock What does lime really see in images?
\newblock In \emph{International conference on machine learning}, pp.\
  3620--3629. PMLR, 2021.

\bibitem[Ghorbani et~al.(2019)Ghorbani, Abid, and
  Zou]{ghorbani2019interpretation}
Ghorbani, A., Abid, A., and Zou, J.
\newblock Interpretation of neural networks is fragile.
\newblock In \emph{Proceedings of the AAAI conference on artificial
  intelligence}, volume~33, pp.\  3681--3688, 2019.

\bibitem[Gildenblat(2021)]{jacobgilpytorchcam}
Gildenblat, J.
\newblock Pytorch library for cam methods.
\newblock \url{https://github.com/jacobgil/pytorch-grad-cam}, 2021.

\bibitem[Guidotti(2022)]{guidotti2022counterfactual}
Guidotti, R.
\newblock Counterfactual explanations and how to find them: literature review
  and benchmarking.
\newblock \emph{Data Mining and Knowledge Discovery}, pp.\  1--55, 2022.

\bibitem[Han et~al.(2022)Han, Srinivas, and Lakkaraju]{han2022explanation}
Han, T., Srinivas, S., and Lakkaraju, H.
\newblock Which explanation should i choose? a function approximation
  perspective to characterizing post hoc explanations.
\newblock \emph{Advances in Neural Information Processing Systems},
  35:\penalty0 5256--5268, 2022.

\bibitem[He et~al.(2016)He, Zhang, Ren, and Sun]{he2016deep}
He, K., Zhang, X., Ren, S., and Sun, J.
\newblock Deep residual learning for image recognition.
\newblock In \emph{Proceedings of the IEEE conference on computer vision and
  pattern recognition}, pp.\  770--778, 2016.

\bibitem[Hedstr{\"o}m et~al.(2023{\natexlab{a}})Hedstr{\"o}m, Weber,
  Krakowczyk, Bareeva, Motzkus, Samek, Lapuschkin, and
  H{\"o}hne]{hedstrom2023quantus}
Hedstr{\"o}m, A., Weber, L., Krakowczyk, D., Bareeva, D., Motzkus, F., Samek,
  W., Lapuschkin, S., and H{\"o}hne, M. M.-C.
\newblock Quantus: An explainable ai toolkit for responsible evaluation of
  neural network explanations and beyond.
\newblock \emph{Journal of Machine Learning Research}, 24\penalty0
  (34):\penalty0 1--11, 2023{\natexlab{a}}.

\bibitem[Hedstr{\"o}m et~al.(2023{\natexlab{b}})Hedstr{\"o}m, Weber,
  Lapuschkin, and H{\"o}hne]{hedstrom2023sanity}
Hedstr{\"o}m, A., Weber, L., Lapuschkin, S., and H{\"o}hne, M.
\newblock Sanity checks revisited: An exploration to repair the model parameter
  randomisation test.
\newblock In \emph{XAI in Action: Past, Present, and Future Applications},
  2023{\natexlab{b}}.

\bibitem[Hitzler \& Sarker(2022)Hitzler and Sarker]{hitzler2022human}
Hitzler, P. and Sarker, M.
\newblock Human-centered concept explanations for neural networks.
\newblock \emph{Neuro-Symbolic Artificial Intelligence: The State of the Art},
  342\penalty0 (337):\penalty0 2, 2022.

\bibitem[Hooker et~al.(2019)Hooker, Erhan, Kindermans, and
  Kim]{hooker2019benchmark}
Hooker, S., Erhan, D., Kindermans, P.-J., and Kim, B.
\newblock A benchmark for interpretability methods in deep neural networks.
\newblock \emph{Advances in neural information processing systems}, 32, 2019.

\bibitem[Huang et~al.(2017)Huang, Liu, Van Der~Maaten, and
  Weinberger]{huang2017densely}
Huang, G., Liu, Z., Van Der~Maaten, L., and Weinberger, K.~Q.
\newblock Densely connected convolutional networks.
\newblock In \emph{Proceedings of the IEEE conference on computer vision and
  pattern recognition}, pp.\  4700--4708, 2017.

\bibitem[Jethani et~al.(2021)Jethani, Sudarshan, Aphinyanaphongs, and
  Ranganath]{jethani2021have}
Jethani, N., Sudarshan, M., Aphinyanaphongs, Y., and Ranganath, R.
\newblock Have we learned to explain?: How interpretability methods can learn
  to encode predictions in their interpretations.
\newblock In \emph{International Conference on Artificial Intelligence and
  Statistics}, pp.\  1459--1467. PMLR, 2021.

\bibitem[Kindermans et~al.(2019)Kindermans, Hooker, Adebayo, Alber, Sch{\"u}tt,
  D{\"a}hne, Erhan, and Kim]{kindermans2019reliability}
Kindermans, P.-J., Hooker, S., Adebayo, J., Alber, M., Sch{\"u}tt, K.~T.,
  D{\"a}hne, S., Erhan, D., and Kim, B.
\newblock The (un) reliability of saliency methods.
\newblock In \emph{Explainable AI: Interpreting, Explaining and Visualizing
  Deep Learning}, pp.\  267--280. Springer, 2019.

\bibitem[Kokhlikyan et~al.(2020)Kokhlikyan, Miglani, Martin, Wang, Alsallakh,
  Reynolds, Melnikov, Kliushkina, Araya, Yan, et~al.]{kokhlikyan2020captum}
Kokhlikyan, N., Miglani, V., Martin, M., Wang, E., Alsallakh, B., Reynolds, J.,
  Melnikov, A., Kliushkina, N., Araya, C., Yan, S., et~al.
\newblock Captum: A unified and generic model interpretability library for
  pytorch.
\newblock \emph{arXiv preprint arXiv:2009.07896}, 2020.

\bibitem[Krishna et~al.(2022)Krishna, Han, Gu, Pombra, Jabbari, Wu, and
  Lakkaraju]{krishna2022disagreement}
Krishna, S., Han, T., Gu, A., Pombra, J., Jabbari, S., Wu, S., and Lakkaraju,
  H.
\newblock The disagreement problem in explainable machine learning: A
  practitioner's perspective.
\newblock \emph{arXiv preprint arXiv:2202.01602}, 2022.

\bibitem[Krizhevsky et~al.(2012)Krizhevsky, Sutskever, and
  Hinton]{krizhevsky2012imagenet}
Krizhevsky, A., Sutskever, I., and Hinton, G.~E.
\newblock Imagenet classification with deep convolutional neural networks.
\newblock \emph{Advances in neural information processing systems}, 25, 2012.

\bibitem[Krogh \& Vedelsby(1994)Krogh and Vedelsby]{krogh1994neural}
Krogh, A. and Vedelsby, J.
\newblock Neural network ensembles, cross validation, and active learning.
\newblock \emph{Advances in neural information processing systems}, 7, 1994.

\bibitem[Kumar et~al.(2021)Kumar, Scheidegger, Venkatasubramanian, and
  Friedler]{kumar2021shapley}
Kumar, I., Scheidegger, C., Venkatasubramanian, S., and Friedler, S.
\newblock Shapley residuals: Quantifying the limits of the shapley value for
  explanations.
\newblock \emph{Advances in Neural Information Processing Systems},
  34:\penalty0 26598--26608, 2021.

\bibitem[Lin et~al.(2023)Lin, Covert, and Lee]{lin2023robustness}
Lin, C., Covert, I., and Lee, S.-I.
\newblock On the robustness of removal-based feature attributions.
\newblock \emph{arXiv preprint arXiv:2306.07462}, 2023.

\bibitem[Liu et~al.(2021)Liu, Lin, Cao, Hu, Wei, Zhang, Lin, and
  Guo]{liu2021swin}
Liu, Z., Lin, Y., Cao, Y., Hu, H., Wei, Y., Zhang, Z., Lin, S., and Guo, B.
\newblock Swin transformer: Hierarchical vision transformer using shifted
  windows.
\newblock In \emph{Proceedings of the IEEE/CVF international conference on
  computer vision}, pp.\  10012--10022, 2021.

\bibitem[Lundberg \& Lee(2017)Lundberg and Lee]{lundberg2017unified}
Lundberg, S.~M. and Lee, S.-I.
\newblock A unified approach to interpreting model predictions.
\newblock \emph{Advances in neural information processing systems}, 30, 2017.

\bibitem[Maurer(2016)]{maurer2016vector}
Maurer, A.
\newblock A vector-contraction inequality for rademacher complexities.
\newblock In \emph{Algorithmic Learning Theory: 27th International Conference,
  ALT 2016 Proceedings 27}, pp.\  3--17. Springer, 2016.

\bibitem[Muhammad \& Yeasin(2020)Muhammad and Yeasin]{muhammad2020eigen}
Muhammad, M.~B. and Yeasin, M.
\newblock Eigen-cam: Class activation map using principal components.
\newblock In \emph{2020 international joint conference on neural networks
  (IJCNN)}, pp.\  1--7. IEEE, 2020.

\bibitem[Nauta et~al.(2023)Nauta, Trienes, Pathak, Nguyen, Peters, Schmitt,
  Schl{\"o}tterer, van Keulen, and Seifert]{nauta2023anecdotal}
Nauta, M., Trienes, J., Pathak, S., Nguyen, E., Peters, M., Schmitt, Y.,
  Schl{\"o}tterer, J., van Keulen, M., and Seifert, C.
\newblock From anecdotal evidence to quantitative evaluation methods: A
  systematic review on evaluating explainable ai.
\newblock \emph{ACM Computing Surveys}, 55\penalty0 (13s):\penalty0 1--42,
  2023.

\bibitem[Neely et~al.(2021)Neely, Schouten, Bleeker, and Lucic]{neely2021order}
Neely, M., Schouten, S.~F., Bleeker, M.~J., and Lucic, A.
\newblock Order in the court: Explainable ai methods prone to disagreement.
\newblock \emph{arXiv preprint arXiv:2105.03287}, 2021.

\bibitem[Nie et~al.(2018)Nie, Zhang, and Patel]{nie2018theoretical}
Nie, W., Zhang, Y., and Patel, A.
\newblock A theoretical explanation for perplexing behaviors of
  backpropagation-based visualizations.
\newblock In \emph{International conference on machine learning}, pp.\
  3809--3818. PMLR, 2018.

\bibitem[Pahde et~al.(2023)Pahde, Yolcu, Binder, Samek, and
  Lapuschkin]{Pahde_2023_CVPR}
Pahde, F., Yolcu, G.~U., Binder, A., Samek, W., and Lapuschkin, S.
\newblock Optimizing explanations by network canonization and hyperparameter
  search.
\newblock In \emph{Proceedings of the IEEE/CVF Conference on Computer Vision
  and Pattern Recognition (CVPR) Workshops}, pp.\  3819--3828, June 2023.

\bibitem[Rebuffi et~al.(2020)Rebuffi, Fong, Ji, and Vedaldi]{rebuffi2020there}
Rebuffi, S.-A., Fong, R., Ji, X., and Vedaldi, A.
\newblock There and back again: Revisiting backpropagation saliency methods.
\newblock In \emph{Proceedings of the IEEE/CVF Conference on Computer Vision
  and Pattern Recognition}, pp.\  8839--8848, 2020.

\bibitem[Ribeiro et~al.(2016)Ribeiro, Singh, and Guestrin]{ribeiro2016should}
Ribeiro, M.~T., Singh, S., and Guestrin, C.
\newblock " why should i trust you?" explaining the predictions of any
  classifier.
\newblock In \emph{Proceedings of the 22nd ACM SIGKDD international conference
  on knowledge discovery and data mining}, pp.\  1135--1144, 2016.

\bibitem[Rieger \& Hansen(2019)Rieger and Hansen]{rieger2019aggregating}
Rieger, L. and Hansen, L.~K.
\newblock Aggregating explanation methods for stable and robust explainability.
\newblock \emph{arXiv preprint arXiv:1903.00519}, 2019.

\bibitem[Rong et~al.(2022)Rong, Leemann, Borisov, Kasneci, and
  Kasneci]{rong2022consistent}
Rong, Y., Leemann, T., Borisov, V., Kasneci, G., and Kasneci, E.
\newblock A consistent and efficient evaluation strategy for attribution
  methods.
\newblock In \emph{International Conference on Machine Learning}, pp.\
  18770--18795. PMLR, 2022.

\bibitem[Sandler et~al.(2018)Sandler, Howard, Zhu, Zhmoginov, and
  Chen]{sandler2018mobilenetv2}
Sandler, M., Howard, A., Zhu, M., Zhmoginov, A., and Chen, L.-C.
\newblock Mobilenetv2: Inverted residuals and linear bottlenecks.
\newblock In \emph{Proceedings of the IEEE conference on computer vision and
  pattern recognition}, pp.\  4510--4520, 2018.

\bibitem[Selvaraju et~al.(2017)Selvaraju, Cogswell, Das, Vedantam, Parikh, and
  Batra]{selvaraju2017grad}
Selvaraju, R.~R., Cogswell, M., Das, A., Vedantam, R., Parikh, D., and Batra,
  D.
\newblock Grad-cam: Visual explanations from deep networks via gradient-based
  localization.
\newblock In \emph{Proceedings of the IEEE international conference on computer
  vision}, pp.\  618--626, 2017.

\bibitem[Shalev-Shwartz \& Ben-David(2014)Shalev-Shwartz and
  Ben-David]{shalev2014understanding}
Shalev-Shwartz, S. and Ben-David, S.
\newblock \emph{Understanding machine learning: From theory to algorithms}.
\newblock Cambridge university press, 2014.

\bibitem[Shrikumar et~al.(2016)Shrikumar, Greenside, Shcherbina, and
  Kundaje]{shrikumar2016not}
Shrikumar, A., Greenside, P., Shcherbina, A., and Kundaje, A.
\newblock Not just a black box: Learning important features through propagating
  activation differences.
\newblock \emph{arXiv preprint arXiv:1605.01713}, 2016.

\bibitem[Shrikumar et~al.(2017)Shrikumar, Greenside, and
  Kundaje]{shrikumar2017learning}
Shrikumar, A., Greenside, P., and Kundaje, A.
\newblock Learning important features through propagating activation
  differences.
\newblock In \emph{International conference on machine learning}, pp.\
  3145--3153. PMLR, 2017.

\bibitem[Simonyan et~al.(2013)Simonyan, Vedaldi, and
  Zisserman]{simonyan2013deep}
Simonyan, K., Vedaldi, A., and Zisserman, A.
\newblock Deep inside convolutional networks: Visualising image classification
  models and saliency maps.
\newblock \emph{arXiv preprint arXiv:1312.6034}, 2013.

\bibitem[Sixt et~al.(2020)Sixt, Granz, and Landgraf]{sixt2020explanations}
Sixt, L., Granz, M., and Landgraf, T.
\newblock When explanations lie: Why many modified bp attributions fail.
\newblock In \emph{International Conference on Machine Learning}, pp.\
  9046--9057. PMLR, 2020.

\bibitem[Smilkov et~al.(2017)Smilkov, Thorat, Kim, Vi{\'e}gas, and
  Wattenberg]{smilkov2017smoothgrad}
Smilkov, D., Thorat, N., Kim, B., Vi{\'e}gas, F., and Wattenberg, M.
\newblock Smoothgrad: removing noise by adding noise.
\newblock \emph{arXiv preprint arXiv:1706.03825}, 2017.

\bibitem[Springenberg et~al.(2014)Springenberg, Dosovitskiy, Brox, and
  Riedmiller]{springenberg2014striving}
Springenberg, J.~T., Dosovitskiy, A., Brox, T., and Riedmiller, M.
\newblock Striving for simplicity: The all convolutional net.
\newblock \emph{arXiv preprint arXiv:1412.6806}, 2014.

\bibitem[Sturmfels et~al.(2020)Sturmfels, Lundberg, and
  Lee]{sturmfels2020visualizing}
Sturmfels, P., Lundberg, S., and Lee, S.-I.
\newblock Visualizing the impact of feature attribution baselines.
\newblock \emph{Distill}, 5\penalty0 (1):\penalty0 e22, 2020.

\bibitem[Sundararajan et~al.(2017)Sundararajan, Taly, and
  Yan]{sundararajan2017axiomatic}
Sundararajan, M., Taly, A., and Yan, Q.
\newblock Axiomatic attribution for deep networks.
\newblock In \emph{International conference on machine learning}, pp.\
  3319--3328. PMLR, 2017.

\bibitem[Tolstikhin et~al.(2021)Tolstikhin, Houlsby, Kolesnikov, Beyer, Zhai,
  Unterthiner, Yung, Steiner, Keysers, Uszkoreit, et~al.]{tolstikhin2021mlp}
Tolstikhin, I.~O., Houlsby, N., Kolesnikov, A., Beyer, L., Zhai, X.,
  Unterthiner, T., Yung, J., Steiner, A., Keysers, D., Uszkoreit, J., et~al.
\newblock Mlp-mixer: An all-mlp architecture for vision.
\newblock \emph{Advances in neural information processing systems},
  34:\penalty0 24261--24272, 2021.

\bibitem[Touvron et~al.(2021)Touvron, Cord, Douze, Massa, Sablayrolles, and
  J{\'e}gou]{touvron2021training}
Touvron, H., Cord, M., Douze, M., Massa, F., Sablayrolles, A., and J{\'e}gou,
  H.
\newblock Training data-efficient image transformers \& distillation through
  attention.
\newblock In \emph{International conference on machine learning}, pp.\
  10347--10357. PMLR, 2021.

\bibitem[Wang et~al.(2020)Wang, Wang, Ramkumar, Mardziel, Fredrikson, and
  Datta]{wang2020smoothed}
Wang, Z., Wang, H., Ramkumar, S., Mardziel, P., Fredrikson, M., and Datta, A.
\newblock Smoothed geometry for robust attribution.
\newblock \emph{Advances in neural information processing systems},
  33:\penalty0 13623--13634, 2020.

\bibitem[Wightman(2019)]{rw2019timm}
Wightman, R.
\newblock Pytorch image models.
\newblock \url{https://github.com/rwightman/pytorch-image-models}, 2019.

\bibitem[Yang et~al.(2023)Yang, Shi, Wei, Liu, Zhao, Ke, Pfister, and
  Ni]{yang2023medmnist}
Yang, J., Shi, R., Wei, D., Liu, Z., Zhao, L., Ke, B., Pfister, H., and Ni, B.
\newblock Medmnist v2-a large-scale lightweight benchmark for 2d and 3d
  biomedical image classification.
\newblock \emph{Scientific Data}, 10\penalty0 (1):\penalty0 41, 2023.

\bibitem[Yeh et~al.(2019)Yeh, Hsieh, Suggala, Inouye, and
  Ravikumar]{yeh2019fidelity}
Yeh, C.-K., Hsieh, C.-Y., Suggala, A., Inouye, D.~I., and Ravikumar, P.~K.
\newblock On the (in) fidelity and sensitivity of explanations.
\newblock \emph{Advances in Neural Information Processing Systems}, 32, 2019.

\bibitem[Zhou et~al.(2022)Zhou, Booth, Ribeiro, and Shah]{zhou2022feature}
Zhou, Y., Booth, S., Ribeiro, M.~T., and Shah, J.
\newblock Do feature attribution methods correctly attribute features?
\newblock In \emph{Proceedings of the AAAI Conference on Artificial
  Intelligence}, volume~36, pp.\  9623--9633, 2022.

\end{thebibliography}
\bibliographystyle{icml2024}

\newpage
\appendix
\onecolumn
\section{Theoretical Proofs}
\label{app:A}
In this section, we conduct the proofs of the theoretical results presented in the main paper.
\paragraph{Probable Improvements via Aggregation}
\begin{theorem}
    Let $\phi^{\omega}=\sum_i \omega_i \phi^i$ be the aggregated explanation then the quality of $\phi^{\omega}$ is better than the the weighted quality of the individual attribution results:
    \begin{align*}
    \mathcal{Q}(\phi^{\omega}) = \sum_i \omega_i \mathcal{Q}(\phi^i) - \mathbb{E}_{\gamma_1}\left[ \sum_i \omega_i \lVert \gamma_1 (\phi_i- \phi^{\omega})\rVert_2^2\right]
    \end{align*}
\end{theorem}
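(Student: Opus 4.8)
The plan is to reduce the statement to a purely algebraic identity that holds pointwise in the query $\gamma_1$, and then take expectations. The central tool is the \emph{ambiguity decomposition}: for any fixed vectors $a_1, \dots, a_k$ with convex combination $\bar{a} = \sum_i \omega_i a_i$ (where $\sum_i \omega_i = 1$) and any target vector $b$, one has
\begin{align*}
\lVert \bar{a} - b \rVert_2^2 = \sum_i \omega_i \lVert a_i - b \rVert_2^2 - \sum_i \omega_i \lVert a_i - \bar{a} \rVert_2^2 .
\end{align*}
I would establish this by expanding $\sum_i \omega_i \lVert a_i - b \rVert_2^2$ around $\bar{a}$, writing $a_i - b = (a_i - \bar{a}) + (\bar{a} - b)$, so that the sum splits into $\sum_i \omega_i \lVert a_i - \bar{a} \rVert_2^2$, the term $\lVert \bar{a} - b \rVert_2^2$ (using $\sum_i \omega_i = 1$), and a cross term $2 \langle \sum_i \omega_i (a_i - \bar{a}),\, \bar{a} - b \rangle$. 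The cross term vanishes because $\sum_i \omega_i (a_i - \bar{a}) = \bar{a} - \bar{a} = 0$, and rearranging yields the identity.

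Next I would instantiate this at the level of the query. Since $\gamma_1 \in \mathbb{R}^{g \times d}$ acts linearly, for each realization of $\gamma_1$ the vectors $a_i := \gamma_1 \phi^i$ have convex combination $\bar{a} = \sum_i \omega_i \gamma_1 \phi^i = \gamma_1 \phi^{\omega}$. Setting $b := \gamma_2$, the decomposition gives, for every fixed $(\gamma_1, \gamma_2)$,
\begin{align*}
\lVert \gamma_1 \phi^{\omega} - \gamma_2 \rVert_2^2 = \sum_i \omega_i \lVert \gamma_1 \phi^i - \gamma_2 \rVert_2^2 - \sum_i \omega_i \lVert \gamma_1 (\phi^i - \phi^{\omega}) \rVert_2^2 .
\end{align*}

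Finally I would take the expectation over $(\gamma_1, \gamma_2)$ and invoke linearity of expectation, treating the weights $\omega_i$ as constants. The left-hand side is $\mathcal{Q}(\phi^{\omega})$ by definition, and the first sum becomes $\sum_i \omega_i \mathcal{Q}(\phi^i)$ since $\mathbb{E}_{\gamma_1,\gamma_2}[\lVert \gamma_1 \phi^i - \gamma_2 \rVert_2^2] = \mathcal{Q}(\phi^i)$. In the second sum the summands $\lVert \gamma_1 (\phi^i - \phi^{\omega}) \rVert_2^2$ do not depend on $\gamma_2$, so the expectation collapses to $\mathbb{E}_{\gamma_1}[\sum_i \omega_i \lVert \gamma_1 (\phi^i - \phi^{\omega}) \rVert_2^2]$, which produces the claimed equality. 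I do not expect a genuine obstacle: the only steps requiring care are checking that the cross term vanishes (this hinges on $\sum_i \omega_i = 1$) and noting that the linearity of $\gamma_1$ is precisely what allows the decomposition to pass through the query. Non-negativity of the $\omega_i$ plays no role in the identity itself, but it guarantees that the subtracted ambiguity term is non-negative, which is what delivers the stated interpretation that the aggregate is at least as good as the weighted individual qualities.
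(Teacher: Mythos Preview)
Your proposal is correct and follows essentially the same approach as the paper: both invoke the classical ambiguity decomposition of Krogh and Vedelsby, with the paper expanding the squared norms directly and cancelling the $\gamma_2$ terms while you obtain the same pointwise identity via the cleaner add-and-subtract trick $a_i - b = (a_i - \bar a) + (\bar a - b)$. The subsequent step of taking expectations and noting the ambiguity term depends only on $\gamma_1$ is identical.
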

\begin{proof}
The proof is similar to a related result established in \cite{krogh1994neural}:
\begin{align*}
    \sum_{i=1}^k \omega_i \mathcal{Q}(\phi^i) -\mathcal{Q}(\phi^{\omega}) &= \sum_{i=1}^k \omega_i \mathbb{E}\left[ \lVert \gamma_1 \phi^i - \gamma_2 \lVert_2^2\right] -  \mathbb{E}\left[ \lVert  \gamma_1 \phi^{\omega} - \gamma_2 \lVert_2^2\right] 
    = \mathbb{E}\left[\sum_{i=1}^k \omega_i \lVert \gamma_1 \phi^{i} - \gamma_2 \lVert_2^2 \; - \; \lVert  \gamma_1 \phi^{\omega} - \gamma_2 \lVert_2^2 \right]\\
    &= \mathbb{E}\left[\sum_{i=1}^k \omega_i \left( {\phi^{i}}^T \gamma_1^T\gamma_1 \phi^{i} -2\gamma_2^T\gamma_1 \phi^{i}  + \gamma_2^T\gamma_2 \right) \; - \; {\phi^{\omega}}^T \gamma_1^T\gamma_1 \phi^{\omega} -2\gamma_2^T\gamma_1 \phi^{\omega}  + \gamma_2^T\gamma_2 \right]\\
    &= \mathbb{E}\left[\sum_{i=1}^k \omega_i \left( {\phi^{i}}^T \gamma_1^T\gamma_1 \phi^{i}\right) \; - \; {\phi^{\omega}}^T \gamma_1^T\gamma_1 \phi^{\omega} \right] = \mathbb{E}\left[ \sum_{i=1}^k \omega_i \left(\gamma_1 (\phi^i - \phi^{\omega}) \right)^T\left(\gamma_1 (\phi^i - \phi^{\omega})\right)\right]\\
    &=\mathbb{E}\left[ \sum_{i=1}^k \omega_i \lVert \gamma_1 (\phi_i- \phi^{\omega})\rVert_2^2\right]
\end{align*}
\end{proof}
\paragraph{Generalization Bound}
The proof of the generalization bound leverages the following result from \cite{shalev2014understanding}, where we slightly adapted the notation to better match our setup:
\begin{theorem}[26.5.3 in \cite{shalev2014understanding} ]
Let $(\mathcal{X} \times \mathcal{Y})$ be a probability space and $\ell: \mathcal{X} \times \mathcal{Y} \rightarrow \mathbb{R}$ be a bounded loss function with $\ell(x, y) \le L$. Let $\hat{f} = \arg\min_{f \in \mathcal{F}} \frac{1}{m}\sum_{i=1}^m \ell(x_i, y_i)$ be an empirical estimator for the minimum of $\mathcal{L}(f)=\mathbb{E}\left[ \ell(f(X), Y )\right]$. Also,for Rademacher variables $\varepsilon_i \in \{-1, 1\}$ the empirical Rademacher Complexity of a function set $\mathcal{F}$ is defined by:
\begin{align*}
    \hat{\mathcal{R}}_m(\mathcal{F}) = \frac{1}{m} \mathbb{E}_{\varepsilon}\left[ \sup_{f \in \mathcal{F}} \sum_{i=1}^m \varepsilon_i f(x_i) \right]
\end{align*}
Then, it holds with a probability of at least $1-\delta$:
\begin{align*} 
    \mathcal{L}(\hat{f}) - \min_{f \in \mathcal{F}} \mathcal{L}(f) \le 2 \hat{\mathcal{R}}_m(\ell \circ \mathcal{F}) + 5 L \sqrt{\frac{2\ln(8/\delta)}{m}}
\end{align*}
\end{theorem}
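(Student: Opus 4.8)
The quoted bound is the classical excess-risk guarantee for empirical risk minimization, so the plan is to follow the standard Rademacher-complexity route. First I would fix a population minimizer $f^\ast \in \arg\min_{f \in \mathcal{F}} \mathcal{L}(f)$ and write $\hat{\mathcal{L}}(f) = \frac{1}{m}\sum_{i=1}^m \ell(x_i, y_i)$ for the empirical risk. Decomposing the excess risk gives
\begin{align*}
\mathcal{L}(\hat{f}) - \mathcal{L}(f^\ast) = \big( \mathcal{L}(\hat{f}) - \hat{\mathcal{L}}(\hat{f}) \big) + \big( \hat{\mathcal{L}}(\hat{f}) - \hat{\mathcal{L}}(f^\ast) \big) + \big( \hat{\mathcal{L}}(f^\ast) - \mathcal{L}(f^\ast) \big) .
\end{align*}
The middle term is non-positive by the defining property of the empirical minimizer $\hat{f}$ and can be dropped. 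It then remains to control the first term uniformly over $\mathcal{F}$, since $\hat{f}$ is data-dependent, and the last term only for the single fixed function $f^\ast$.

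For the first term I would bound it by the uniform deviation $\sup_{f \in \mathcal{F}} \big( \mathcal{L}(f) - \hat{\mathcal{L}}(f) \big)$ and control this quantity in two moves. First, I would show it concentrates around its expectation: since each summand of $\hat{\mathcal{L}}$ is bounded by $L$, replacing a single sample changes the supremum by at most $L/m$, so McDiarmid's bounded-differences inequality yields with high probability a deviation of order $L\sqrt{\ln(1/\delta)/m}$ above its mean. Second, I would bound the mean itself via the symmetrization argument: introducing an independent ghost sample and the Rademacher signs $\varepsilon_i$ couples the two empirical averages and gives $\mathbb{E}\big[ \sup_f ( \mathcal{L}(f) - \hat{\mathcal{L}}(f) ) \big] \le 2\,\mathbb{E}[\hat{\mathcal{R}}_m(\ell \circ \mathcal{F})]$, which supplies the leading factor $2$. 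Because the theorem is stated in terms of the \emph{empirical} Rademacher complexity rather than its expectation, I would then apply McDiarmid a second time to $\hat{\mathcal{R}}_m(\ell \circ \mathcal{F})$ itself, which again has bounded differences $L/m$, to replace its expectation by the observed empirical value at the cost of one further $\sqrt{\ln(1/\delta)/m}$ term.

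For the last term, $\hat{\mathcal{L}}(f^\ast) - \mathcal{L}(f^\ast)$ is the deviation of an average of $m$ independent bounded random variables from its mean, so Hoeffding's inequality gives a bound of order $L\sqrt{\ln(1/\delta)/m}$ with high probability. Assembling the pieces, I would allocate the total failure probability $\delta$ across the (at most three) concentration events through a union bound, choosing each partial confidence level proportional to $\delta$; this is precisely what collapses the several $\sqrt{\ln(\cdot/\delta)/m}$ contributions into the single term $5L\sqrt{2\ln(8/\delta)/m}$ while retaining the factor $2$ on $\hat{\mathcal{R}}_m(\ell \circ \mathcal{F})$ inherited from symmetrization.

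The main obstacle is the bookkeeping of constants and probabilities: the bound chains three distinct concentration steps (two McDiarmid applications and one Hoeffding) on top of the symmetrization inequality, and the clean final form depends on partitioning $\delta$ so the separate radicals combine into the stated constant. Verifying that each quantity genuinely enjoys bounded differences $L/m$ is the delicate point, in particular for the empirical Rademacher complexity, whose supremum-of-signed-sums structure must be checked to change by at most $L/m$ under a single-sample perturbation; the symmetrization step, though conceptually the heart of the argument, is routine once the ghost-sample coupling is set up.
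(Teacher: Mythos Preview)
The paper does not supply its own proof of this statement: it is quoted verbatim as Theorem 26.5.3 from Shalev-Shwartz and Ben-David and used as a black box in the derivation of Theorem 4.3. Your sketch reproduces exactly the standard textbook argument that appears there---the excess-risk decomposition, symmetrization to obtain the expected Rademacher complexity, two McDiarmid applications (one for the uniform deviation, one to pass from expected to empirical Rademacher complexity), Hoeffding for the fixed minimizer, and a union bound over the three concentration events to collect the $\sqrt{\ln(\cdot/\delta)/m}$ terms into $5L\sqrt{2\ln(8/\delta)/m}$. There is no alternative route to compare against, and your outline is correct.
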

On top of that, we need the following Lemmas:
\begin{lemma} Suppose that for all $x, y \in S \subset \mathbb{R}^g$ we have $\lVert x -y \rVert_2 \le c$ for a constant $c$. 
Then, the squared Euclidean distance $l(x, y) = \lVert x - y \rVert_2^2$ is Lipschitz continuous in its first argument with Lipschitz constant $L=2c$.
\end{lemma}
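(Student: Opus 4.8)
The plan is to prove the Lipschitz bound directly from an elementary algebraic factorization rather than invoking any calculus. Fixing the second argument $y$ and taking two arbitrary points $x_1, x_2 \in S$, the quantity to control is $|l(x_1,y) - l(x_2,y)| = \bigl| \lVert x_1 - y\rVert_2^2 - \lVert x_2 - y\rVert_2^2 \bigr|$. First I would factor this as a difference of squares, writing
\[
\lVert x_1 - y\rVert_2^2 - \lVert x_2 - y\rVert_2^2 = \bigl( \lVert x_1 - y\rVert_2 - \lVert x_2 - y\rVert_2 \bigr)\bigl( \lVert x_1 - y\rVert_2 + \lVert x_2 - y\rVert_2 \bigr).
\]

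Next I would bound each factor separately. For the first factor, the reverse triangle inequality gives $\bigl| \lVert x_1 - y\rVert_2 - \lVert x_2 - y\rVert_2 \bigr| \le \lVert x_1 - x_2\rVert_2$, which is exactly the term that should appear in a Lipschitz estimate. For the second factor, I would appeal to the standing assumption: since $x_1, x_2, y$ all lie in $S$, the hypothesis $\lVert x - y\rVert_2 \le c$ applied to the pairs $(x_1, y)$ and $(x_2, y)$ yields $\lVert x_1 - y\rVert_2 + \lVert x_2 - y\rVert_2 \le 2c$. Multiplying the two bounds then gives $|l(x_1,y) - l(x_2,y)| \le 2c\,\lVert x_1 - x_2\rVert_2$, which is the claimed Lipschitz constant $L = 2c$.

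The argument is essentially routine, so the only point requiring care is the correct reading of the boundedness hypothesis: it constrains the diameter of $S$, and the estimate on the second factor silently uses that $y$ itself belongs to $S$ so that both $\lVert x_1 - y\rVert_2$ and $\lVert x_2 - y\rVert_2$ are at most $c$. An equivalent route would compute the gradient $\nabla_x l(x,y) = 2(x-y)$, note that $\lVert \nabla_x l\rVert_2 = 2\lVert x-y\rVert_2 \le 2c$ on $S$, and invoke the mean-value inequality; I would prefer the factorization above since it avoids differentiability bookkeeping and matches the clean diameter bound directly.
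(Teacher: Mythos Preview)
Your proof is correct and follows essentially the same route as the paper: factor the difference of squared norms, bound the difference factor by the reverse triangle inequality, and bound the sum factor by $2c$ using the diameter hypothesis. The only cosmetic differences are notation ($x_1,x_2$ versus $x,x'$) and your added remark about the gradient alternative.
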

\begin{proof}
The statement follows from quadratic factorization and the reverse triangular inequality. \\
For all $x, x', y \in S$ it holds:
\begin{align*}
    \lvert l(x, y) - l(x', y)\rvert &=\lvert  \lVert x - y \rVert_2^2 -\lVert x' - y \rVert_2^2 \vert \\ &= \lvert \left( \lVert x - y \rVert_2 + \lVert x' - y \rVert_2 \right) \left( \lVert x - y \rVert_2 - \lVert x' - y \rVert_2 \right) \vert \\
    & \le \lvert \left( \lVert x - y \rVert_2 + \lVert x' - y \rVert_2 \right)\rvert \; \lVert x-y - x'+ y \rVert_2\\
    & \le 2c \; \lVert x - x' \rVert_2
\end{align*}
\end{proof}

\begin{lemma} For $i=1, \dots, m$, let $x_i\in \mathbb{R}^k$ and $\varepsilon_i \in \{-1,1\}$ be Rademacher variables, so $\mathbb{P}(\varepsilon=\pm1)=1/2$. Then:
\begin{align*}
\mathbb{E}_{\varepsilon}\left[ \lVert \sum_{i=1}^m \varepsilon_ix_i \rVert_{\infty}\right]\le \sqrt{m}\max_{i=1, \dots, m} \lVert x_i \rVert_{\infty} \sqrt{2\ln(2k)}
\end{align*}
\end{lemma}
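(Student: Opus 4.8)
The plan is to reduce the expected $\ell_\infty$-norm of the Rademacher sum to the maximum of a finite collection of scalar sub-Gaussian random variables, and then to apply a standard finite maximal inequality of Massart type. Throughout, write $S_j = \sum_{i=1}^m \varepsilon_i x_{i,j}$ for the $j$-th coordinate of the vector $\sum_i \varepsilon_i x_i$, where $x_{i,j}$ denotes the $j$-th entry of $x_i$.

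First I would rewrite the objective as a maximum over $2k$ scalar statistics. Since $\lVert \sum_i \varepsilon_i x_i \rVert_{\infty} = \max_{j} \lvert S_j\rvert = \max\{S_1,\dots,S_k,-S_1,\dots,-S_k\}$, the quantity of interest is $\mathbb{E}_\varepsilon[\max_{l=1,\dots,N} Z_l]$ with $N = 2k$, where each $Z_l = \sum_{i=1}^m \varepsilon_i a_{i,l}$ and $a_{i,l} = \pm x_{i,j(l)}$ for the appropriate coordinate $j(l)$ and sign. Second, I would establish a common sub-Gaussian moment-generating bound for every $Z_l$. Because each term $\varepsilon_i a_{i,l}$ takes the values $\pm a_{i,l}$ with equal probability, Hoeffding's lemma gives $\mathbb{E}_{\varepsilon_i}[e^{\lambda \varepsilon_i a_{i,l}}] = \cosh(\lambda a_{i,l}) \le e^{\lambda^2 a_{i,l}^2/2}$, and independence of the $\varepsilon_i$ yields $\mathbb{E}_\varepsilon[e^{\lambda Z_l}] \le \exp\!\big(\tfrac{\lambda^2}{2}\sum_{i=1}^m a_{i,l}^2\big)$. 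Bounding $\sum_{i=1}^m a_{i,l}^2 = \sum_{i=1}^m x_{i,j(l)}^2 \le m \max_i \lVert x_i \rVert_{\infty}^2 =: \sigma^2$ uniformly in $l$ supplies a shared variance proxy, and this deliberately loose step is exactly where the factor $\sqrt{m}$ in the claim originates.

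Finally I would invoke the maximal inequality itself. For any $\lambda > 0$, Jensen's inequality applied to the convex map $t \mapsto e^{\lambda t}$ together with a union bound over the $N$ terms gives $e^{\lambda\, \mathbb{E}[\max_l Z_l]} \le \mathbb{E}[\max_l e^{\lambda Z_l}] \le \sum_{l=1}^N \mathbb{E}[e^{\lambda Z_l}] \le N e^{\lambda^2\sigma^2/2}$. Taking logarithms yields $\mathbb{E}[\max_l Z_l] \le \tfrac{\ln N}{\lambda} + \tfrac{\lambda \sigma^2}{2}$, and optimizing the right-hand side at $\lambda = \sqrt{2\ln N}/\sigma$ gives $\mathbb{E}[\max_l Z_l] \le \sigma\sqrt{2\ln N}$. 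Substituting $N = 2k$ and $\sigma = \sqrt{m}\,\max_i \lVert x_i\rVert_{\infty}$ reproduces the stated bound $\mathbb{E}_\varepsilon[\lVert \sum_i \varepsilon_i x_i\rVert_{\infty}] \le \sqrt{m}\,\max_i \lVert x_i\rVert_{\infty}\sqrt{2\ln(2k)}$.

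I expect the only genuine conceptual step to be the reduction in the first paragraph, namely recognizing that the $\ell_\infty$-norm is a maximum over $2k$ signed coordinate sums and then correctly tracking the uniform variance proxy $\sigma^2$ across these $2k$ variables. Once that encoding is in place, the sub-Gaussianity estimate and the Massart argument are entirely routine. The $\sqrt{2\ln(2k)}$ dependence is a direct consequence of the logarithm of $N = 2k$ appearing in the optimized bound, so no additional care beyond the union bound is required there.
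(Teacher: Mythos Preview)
Your proof is correct and follows exactly the approach the paper indicates: the paper's own proof merely cites Massart's lemma (pointing to Lemma~26.11 in Shalev-Shwartz and Ben-David), and what you have written is precisely the standard derivation of that lemma specialized to this setting. The reduction to a maximum over $2k$ signed coordinate sums, the Hoeffding/sub-Gaussian MGF bound, and the optimization over $\lambda$ are all the expected steps, so there is nothing to add.
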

\begin{proof}
The proof leverages Massart's lemma and is for instance conducted within the proof of Lemma 26.11  in \cite{shalev2014understanding}
\end{proof}
Now we are equipped to proof Theorem 4.3:
\begin{theorem}  
Let $\mathcal{Q}$ be a generalized $L2$ metric with $\max_{\gamma_1} \lVert \gamma_1 \rVert_1 \le c_1$ and let $\Phi = (\phi^{1},\dots, \phi^{k})$ be the matrix of stacked attribution outcomes to be aggregated into $\phi^{\omega} = \sum_{i=1}^k \omega_i \phi^i$. Suppose that $\max_{\gamma_1, \gamma_2} \lVert \gamma_1\phi^i  -\gamma_2 \rVert_2^2 \le c_2$  as well as $\lVert \phi^i\rVert_{\infty}\le 1$ for all $i=1, \dots, k$. 
Also let $\Omega$ be the set of feasible weights $\omega$ and let $\hat{\omega}$ be an aggregation weight estimate obtained from $m$ metric evaluations given by
\begin{align*}
    \hat{\omega} =& \arg\min_{\omega \in \Omega}\; \; \frac{1}{m}\sum_{j=1}^m \; \; \lVert \gamma_1^{(j)}\phi^{\omega} - \gamma_2^{(j)} \rVert_2^2 
\end{align*}
Then there exist a constant $C(c_1, c_2) >0 $ depending on $c_1$ and $c_2$ such that with probability of at least $(1-\delta)$:
\begin{align*}
\mathcal{Q}(\phi^{\hat{\omega}}) - \min_{\omega \in \Omega} \mathcal{Q}(\phi^{\omega})  \le C \; \sqrt{\dfrac{\ln(16 k/\delta)}{m}}
\end{align*}
\end{theorem}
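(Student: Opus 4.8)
The plan is to recognize the statement as a standard excess-risk bound for empirical risk minimization and then invoke the high-probability generalization bound quoted above (Theorem 26.5.3 of \cite{shalev2014understanding}). Concretely, I would take the ``sample'' to be a metric-evaluation pair $(\gamma_1,\gamma_2)$, the hypothesis to be indexed by $\omega\in\Omega$ through the vector-valued map $(\gamma_1,\gamma_2)\mapsto\gamma_1\Phi\omega\in\mathbb{R}^g$, and the loss to be $\ell\big((\gamma_1,\gamma_2),\omega\big)=\lVert\gamma_1\Phi\omega-\gamma_2\rVert_2^2$. With this identification the population risk is exactly $\mathcal{Q}(\phi^{\omega})$, the empirical minimizer is $\hat{\omega}$, and the left-hand side $\mathcal{Q}(\phi^{\hat\omega})-\min_{\omega\in\Omega}\mathcal{Q}(\phi^{\omega})$ is the excess risk. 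Before applying the theorem I would check that the loss is uniformly bounded on the feasible set: since $\sum_i\omega_i=1$ gives $\gamma_1\phi^{\omega}-\gamma_2=\sum_i\omega_i(\gamma_1\phi^i-\gamma_2)$, convexity of $\lVert\cdot\rVert_2^2$ yields $\lVert\gamma_1\phi^{\omega}-\gamma_2\rVert_2^2\le\sum_i\omega_i\lVert\gamma_1\phi^i-\gamma_2\rVert_2^2\le c_2$, so one may take $L=c_2$ for the additive concentration term and the effective radius $\sqrt{c_2}$ for the Lipschitz argument below.

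It then remains to bound the empirical Rademacher complexity $\hat{\mathcal{R}}_m(\ell\circ\mathcal{F})$ of the composed loss class, where $\mathcal{F}=\{\,\omega\mapsto\gamma_1\Phi\omega:\omega\in\Omega\,\}$. The difficulty is that the predictor is vector-valued, so the classical scalar contraction principle does not apply directly. Instead I would first use the Lipschitz lemma above to note that $x\mapsto\lVert x-\gamma_2\rVert_2^2$ is Lipschitz with constant $2\sqrt{c_2}$ on the relevant ball, and then peel off this Lipschitz loss using the vector-contraction inequality of \cite{maurer2016vector}. This replaces the single Rademacher sequence by a doubly-indexed sequence $\varepsilon_{jt}$ (one index for the $m$ samples, one for the $g$ output coordinates) and reduces the task to controlling
\begin{align*}
\frac{1}{m}\,\mathbb{E}_{\varepsilon}\sup_{\omega\in\Omega}\sum_{j=1}^m\sum_{t=1}^g\varepsilon_{jt}\,(\gamma_1^{(j)}\Phi\,\omega)_t .
\end{align*}

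To evaluate this I would exploit the simplex geometry of $\Omega$: writing the inner double sum as $\langle v,\omega\rangle$ with $v_l=\sum_{j,t}\varepsilon_{jt}(\gamma_1^{(j)}\Phi)_{tl}$, the supremum over the probability simplex equals $\max_l v_l\le\lVert v\rVert_\infty$, which is precisely the object handled by the Massart-type lemma above. The key calculation is the per-coordinate variance proxy: using $\lVert\gamma_1\rVert_1\le c_1$ together with the normalization $\lVert\phi^l\rVert_\infty\le1$ gives $\lVert\gamma_1^{(j)}\phi^l\rVert_2\le c_1$, hence $\sum_{j,t}(\gamma_1^{(j)}\Phi)_{tl}^2=\sum_j\lVert\gamma_1^{(j)}\phi^l\rVert_2^2\le m\,c_1^2$ for every $l$. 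Feeding this sum-of-squares bound into Massart's inequality produces a Rademacher term of order $c_1\sqrt{\ln(2k)/m}$, crucially with no residual dependence on the query dimension $g$. Finally I would merge the Rademacher contribution $\propto\sqrt{\ln(2k)}$ with the concentration contribution $\propto\sqrt{\ln(8/\delta)}$ from Theorem 26.5.3 via $\sqrt{a}+\sqrt{b}\le\sqrt{2(a+b)}$ and $\ln(2k)+\ln(8/\delta)=\ln(16k/\delta)$, which is exactly what produces the stated $\sqrt{\ln(16k/\delta)/m}$ rate and an explicit constant $C(c_1,c_2)$.

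The main obstacle is the vector-valued loss together with the need to keep the constant independent of the ambient dimension $g$. A naive application of the Massart lemma to all $mg$ Rademacher variables with a per-term $\ell_\infty$ bound would introduce a spurious $\sqrt{g}$ factor; the delicate point is to instead route the contraction through Maurer's inequality and then control the squared column sums $\sum_{j,t}(\gamma_1^{(j)}\Phi)_{tl}^2$ directly, so that the bounds on $\lVert\gamma_1\rVert_1$ and $\lVert\phi^l\rVert_\infty$ collapse the dimension cleanly into the factor $c_1\sqrt{m}$. Everything else is bookkeeping with the constants supplied by Theorem 26.5.3.
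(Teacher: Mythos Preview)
Your proposal follows the same architecture as the paper's proof: invoke Theorem~26.5.3 of \cite{shalev2014understanding}, bound the loss by $c_2$, use Lipschitzness of the squared norm with constant $2\sqrt{c_2}$, peel the loss via Maurer's vector contraction, exploit the simplex geometry ($\sup_{\omega\in\Omega}\langle v,\omega\rangle\le\lVert v\rVert_\infty$) to reduce to a Massart-type $\ell_\infty$ bound, and finally merge the two square-root terms via $\sqrt{a}+\sqrt{b}\le\sqrt{2(a+b)}$ to produce $\sqrt{\ln(16k/\delta)/m}$.

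The one step that does not go through as written is your claim that $\lVert\gamma_1^{(j)}\phi^l\rVert_2\le c_1$ and hence that the Rademacher term is free of the output dimension $g$. Under the interpretation of $\lVert\gamma_1\rVert_1$ actually used in the paper (a row-sum/operator-type norm, which yields only the entrywise control $\lVert\gamma_1\phi^l\rVert_\infty\le c_1$; see the paper's own step $\max_{i,j}\lvert(\gamma_1\Phi)_{i,j}\rvert\le\lVert\gamma_1\rVert_1\lVert\phi^j\rVert_\infty\le c_1$), one obtains $\lVert\gamma_1^{(j)}\phi^l\rVert_2\le\sqrt{g}\,c_1$, so your variance proxy becomes $\sigma_l^2\le mg\,c_1^2$ and the $\sqrt{g}$ factor reappears. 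This is exactly what the paper gets---its Rademacher bound is $2c_1\sqrt{2gc_2}\sqrt{2\ln(2k)/m}$ and the final constant is $C=\max\{4c_1\sqrt{2gc_2},5c_2\}$---so your ``dimension-free'' assertion is an overclaim, but it affects only the constant, not the rate or the validity of the argument.
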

\begin{proof} The theorem can be interpreted as an extension and adaptation of Theorem 26.15 in \cite{shalev2014understanding} to the specifics of our setup. We develop appropriate bounds on the Rademacher complexity of vector-valued functions based on a concentration result from \cite{maurer2016vector} and the specific properties of generalized $L2$ metrics over convex combinations of normalized feature attribution results.\\
Let $\mathcal{F}= \{f :\mathbb{R}^{g\times k} \rightarrow \mathbb{R}^g, \; f(A)= A\omega \;| \; \omega \in \Omega \}$,  then with Theorem A2 above we immediately get
\begin{align}
    \mathcal{Q}(\phi^{\hat{\omega}}) - \min_{\omega \in \Omega} \mathcal{Q}(\phi^{\omega}) \le 2 \hat{\mathcal{R}}_m(\ell \circ \mathcal{F}) + 5 c_2 \sqrt{\frac{2\ln(8/\delta)}{m}}
\end{align}
where $l$ is the squared Euclidean distance. To ease the notation in the following, define $A^{(i)} := \gamma_1^{(i)}\Phi $ and the $j$-th row of $A^{(i)}$ as $A^{(i)}_{j:} \in \mathbb{R}^k$. Using the assumption that $\max_{\gamma_1, \gamma_2} \lVert \gamma_1\phi^i  -\gamma_2 \rVert_2^2 \le c_2$, we know from Lemma A.3 that $l$ is Lipschitz continuous with constant $L = 2\sqrt{c_2}$.
Therefore, we are able to leverage a corresponding result from \cite{maurer2016vector} $(\star)$ to upper-bound the empirical Rademacher complexity of the vector-valued function set. More precisely, it holds:
\begin{align*}
m\hat{\mathcal{R}}_m(\ell \circ \mathcal{F}) &= \mathbb{E}_{\varepsilon}\left[ \sup_{f \in \mathcal{F}} \sum_{i=1}^m \varepsilon_i \lVert f(A^{(i)}) -\gamma_2^{(i)}\lVert_2^2 \right] \overset{(\star)}{\le} \sqrt{2} L  \mathbb{E}_{\varepsilon}\left[ \sup_{f \in \mathcal{F}} \sum_{i=1}^m \sum_{j=1}^g \varepsilon_{i,j} f_j(A^{(i)}) \right] \\
 &= 2\sqrt{2c_2}  \mathbb{E}_{\varepsilon}\left[ \sup_{\omega \in \Omega} \sum_{i=1}^m \sum_{j=1}^g \varepsilon_{i,j} (A^{(i)}\omega)_j \right] = 2\sqrt{2c_2}  \mathbb{E}_{\varepsilon}\left[ \sup_{\omega \in \Omega} \left\langle \sum_{i=1}^m \sum_{j=1}^g \varepsilon_{i,j} A^{(i)}_{j:}, \omega\right\rangle \right]\\
 &\le 2\sqrt{2c_2}  \mathbb{E}_{\varepsilon}\left[ \sup_{\omega \in \Omega} \lVert \sum_{i=1}^m \sum_{j=1}^g \varepsilon_{i,j} A^{(i)}_{j:}\rVert_{\infty} \left\lVert \omega \right\rVert_1 \right] =  2\sqrt{2c_2}  \mathbb{E}_{\varepsilon}\left[ \lVert \sum_{i=1}^m \sum_{j=1}^g \varepsilon_{i,j} A^{(i)}_{j:}\rVert_{\infty}\right]
\end{align*}
where $\varepsilon_i$ as well as $\varepsilon_{i,j}$ are Rademacher variables and the last two steps follow from the Hölder inequality as well as the constraints on $\omega$. Next, notice that the term $\sum_{i=1}^m \sum_{j=1}^g \varepsilon_{i,j} A^{(i)}_{j:}$ sums over all row of $A^{i}$ across all samples. Hence we can reindex the term as a sum over all consecutive rows in the sample denoted by $a^l \in \mathbb{R}^k$ with $l=1,\dots, gm$.\\ Applying Lemma A.4 yields:
\begin{align*}
    \mathbb{E}_{\varepsilon}\left[ \lVert \sum_{i=1}^m \sum_{j=1}^g \varepsilon_{i,j} A^{(i)}_{j:}\rVert_{\infty}\right] = \mathbb{E}_{\varepsilon}\left[ \lVert \sum_{l=1}^{gm} \varepsilon_{l} a^l\rVert_{\infty}\right] \le \sqrt{gm}\max_{l=1, \dots, gm} \lVert a^l \rVert_{\infty} \sqrt{2\ln(2k)}
\end{align*}
Notice that bounding $\max_{l=1, \dots, gm} \lVert a^l \rVert_{\infty}$ requires to bound the maximal entry of $\gamma_1\Phi$ that could be encountered while computing the metric. Using the assumed constraints on $\gamma_1$ and $\Phi$ we obtain:
\begin{align*}
\max_{l=1, \dots, gm} \lVert a^l \rVert_{\infty}  \le  \max_{i,j} \lvert (\gamma_1\Phi)_{i,j} \rvert \le \max_{\gamma_1} \lVert \gamma_1 \lVert_1 \; \max_j \lVert \phi^j \rVert_{\infty} \le c_1
\end{align*}
Therefore, we finally have an upper bound on the empirical Rademacher complexity given by:
\begin{align}
    \hat{\mathcal{R}}_m(\ell \circ \mathcal{F}) \le 2c_1\sqrt{2gc_2}\sqrt{\dfrac{2\ln(2k)}{m}}
\end{align}
Combining $(1)$ and $(2)$  and setting $C:= \max\{4c_1\sqrt{2gc_2},5c_2\}$ gives:
\begin{align*}
        \mathcal{Q}(\phi^{\hat{\omega}}) - \min_{\omega \in \Omega} \mathcal{Q}(\phi^{\omega}) \le 4c_1\sqrt{2gc_2}\sqrt{\dfrac{2\ln(2k)}{m}}  + 5 c_2 \sqrt{\frac{2\ln(8/\delta)}{m}}
        \le C\sqrt{\dfrac{4\ln(16k/\delta)}{m}}
\end{align*}
where the last step utilizes the fact that $\sqrt{a} + \sqrt{b} \le \sqrt{2(a+b)}$ to merge the two square roots.
\end{proof}

\section{Additional Generalized L2 Metrics for other Dimensions of Explanation Quality}
\label{app:B}
In the main paper, we showed that popular metrics for feature attribution such as Infidelity and Average-Sensitivity are generalized $L2$ metrics. Below we also show that metrics regarding other quality criteria can be expressed as such.
\paragraph{Alignment Metrics}
Alignment metrics, also referred to as localization metrics, measure to which extent an attribution result corresponds to an expected explanation grounded in domain knowledge. For image classification models, such metrics typically quantify how well important image regions overlap with the actual location of the classified object in the image. A simple way to achieve this is to define $\gamma_2^{\star}$ as desired attribution results and measure alignment via the squared Euclidean distance: $\mathcal{Q}(\phi(x)) = \lVert \phi(x) -\gamma_2^{\star} \rVert_2^2$. Another possibility that closely resembles the logic of localization metrics for computer vision models is to measure if important features lie within a region of interest. Let $\mathcal{I} \subset \{1, \dots d\}$ be an index indicating the position of an object to be detected by a model. Then $\mathcal{Q}(\phi(x)) = \lVert \phi(x) -\phi_{\mathcal{I}}(x) \rVert_2^2$ is a generalized L2 metric that captures how much attribution mass is allocated to the region of interest.
\paragraph{Randomization-based sanity checks}
Randomization-based sanity checks have been developed to verify that an attribution result is not abstract and does indeed depend sufficiently on the model of interest. Typically they asses whether feature attributions change if certain parameters of the model are randomized. If an attribution result is invariant to parameter randomization it might not be reliably explain the examined model. To express this via a generalized $L2$ metric, suppose we are interested in explaining the prediction of a model $f_\theta$ with parameters $\theta$. Let $\phi_{\theta}(x)$ be a feature attribution result obtained from the original model $f_\theta$. Further, let $f_{\tilde{\theta}}$ denote the corresponding model where all parameters or a specific subset is randomized based on $\tilde{\theta} \sim \mathbb{P}_{\tilde{\theta}}$. Then, the variability of $\phi_{\theta}(x)$ under parameter randomization can be computed via   $\mathcal{Q}(\phi_{\theta}(x)) = -\mathbb{E}_{\tilde{\theta}}\lVert \phi_{\theta}(x) -\phi_{\tilde{\theta}}(x) \rVert_2^2$. Note that we incorporate a negative sign to indicate that invariant attribution results correspond to lower quality.

\paragraph{Complexity}
To express complexity measures for feature attributions one can use the truncated $L2$ norm $\lVert \cdot \rVert_{2, t}$ as a sparsity measure \cite{dicker2014sparsity}. This implies that  $\mathcal{Q}(\phi(x)) = \lVert \min\{\phi(x), t\} \rVert_2^2$ where $\min\{ \cdot, \cdot\}$ denotes the elementwise minimum operator and $t$ a predefined noise threshold. Note that improving this metric $\mathcal{Q}$ requires pushing more entries of $\phi(x)$ below the threshold $t$ which also promotes sparsity and reduces complexity. To translate this metric to the generic formulation proposed in Definition 4.1. one needs to set $\gamma_1 \in \mathbb{R}^{d \times d}$ and $\gamma_2 \in \mathbb{R}^{d}$ like this:
\begin{align*}
(\gamma_1)_{i,j}=\begin{cases} &i=j: 1 \; \text{if}  \; \lvert \phi_i \rvert<t \; \text{else} \; 0 \\
&i \ne j: 0
\end{cases} \quad \quad \quad 
(\gamma_2)_{i}=\begin{cases} -t&  \; \text{if}  \; \lvert \phi_i  \rvert >t \\
0& \; \text{else}
\end{cases} 
\end{align*}
\section{Experimental Details}
\label{app:C}
\subsection{Metric Details}
\paragraph{Robustness}
Throughout all experiments Average-Sensitivity $(\text{SENS}_{\textit{AVG}})$ and and Max-Sensitivity $(\text{SENS}_{\textit{MAX}})$ are computed using uniformly distributed corruptions $\varepsilon \sim \mathcal{U}[-0.1, 0.1]$:
\begin{align*}
     \text{SENS}_{\textit{AVG}} : \mathbb{E}_{\varepsilon} \left[ \lVert \phi(x) - \phi(x+\varepsilon) \rVert \right]_2^2 \qquad \text{and} \qquad
     \text{SENS}_{\textit{MAX}} : \max_{\varepsilon} \lVert \phi(x) - \phi(x+\varepsilon) \rVert_2^2  
\end{align*}
To optimize the aggregation weights for $\text{AGG}_{\textit{robust}}$ and $\text{AGG}_{\textit{opt}}$ the expectation is estimated using only $m_{\textit{agg}}$ samples for $\varepsilon$. During the evaluation in section 5.1, the metrics are computed using $m_{\textit{eval}}=200$ unseen samples to explicitly check for generalization. During evaluation both metrics are computed using the implementation provided by \texttt{Quantus} \cite{hedstrom2023quantus}.
\paragraph{Faithfulness}
To compute the Infidelity metric $(\text{INFD})$ we rely on original design principles proposed by the authors \cite{yeh2019fidelity}. In particular, we utilized binary perturbations $I \in \{0,1\}^d$ that randomly select an image area of $20\%$ such that $I^T\phi$ equals the sum of attribution scores allocated to the selected region. From this quantity, we subtract the prediction change caused by replacing the selected image area with the corresponding values of a blurred image version $x_b$. This can be formalized using a map $h: \mathbb{R}^d \times \mathbb{R}^d \times \mathbb{R}^d \rightarrow \mathbb{R}^d$ with  $h(x, x_b, I)_i = (x_b)_i \; \text{if} \; I_i=1$ and $h(x, x_b, I)_i = x_i$ else. We also incorporated the normalization utilized by the authors in their implementation \cite{yeh2019fidelity}.
For the Faithfulness correlation metric $(\text{FCOR})$ we use the same kind of perturbation and use the Pearson Correlation as correlation measure $\text{corr}$ as proposed in \cite{bhatt2021evaluating}. This results in:
\begin{align*}
    \text{INFD}:\mathbb{E}_{I} \left[ ( I^T\phi(x) - (f(x)-f(h(x, x_b, I)))^2 \right] \qquad \text{and} \qquad   \text{FCOR}: \text{corr}_I \left(I^T\phi(x),  f(x)-f(h(x, x_b, I)) \right)
\end{align*} 
To optimize the aggregation weights for $\text{AGG}_{\textit{faith}}$ and $\text{AGG}_{\textit{opt}}$ the expectation and correlation is estimated using only $m_{\textit{agg}}$ samples of $I$. During the evaluation in section 5.1, the metrics are again computed using $m_{\textit{eval}}=200$ fresh samples to explicitly check for generalization. 
\paragraph{Stability}
All stability metrics have been computed based on their implementation in $\texttt{OpenXAI}$ \cite{agarwal2022openxai}. For Relative Representation Stability (RRS) we used the activation of the final layer before the classification happens as underlying representation to compute the metric.
\paragraph{ROAD} To compute the Remove and Debias metric \cite{rong2022consistent} we leveraged the implementation provided by the $\texttt{pytorch-gradcam}$ library \cite{jacobgilpytorchcam}. Therefore, $\text{MoRF}_p$ (Most relevant first) corresponds to the average decrease in confidence for the correct class of an image resulting from removing the $p$ percent of the most important pixels as indicated by an attribution result. Note, that feature removal is performed using noisy linear imputation which has been demonstrated to produce consistent results matching the outcomes of retaining-based metrics such as Remove and Retrain \cite{hooker2019benchmark}. 
\subsection{Feature Attribution Methods and Aggregations}
\paragraph{Individual Methods}
During the experiments, we evaluated in total twelve different feature attribution techniques. The methods Saliency \cite{simonyan2013deep}, InputxGrad \cite{shrikumar2016not}, Guided Backpropagation\cite{springenberg2014striving}, DeepLift \cite{shrikumar2017learning}, Integrated Gradients \cite{sundararajan2017axiomatic}, GradSHAP \cite{lundberg2017unified},  SmoothGrad \cite{smilkov2017smoothgrad}, VarGrad  \cite{adebayo2018sanity}, Shapley Values \cite{castro2009polynomial}, LIME \cite{ribeiro2016should} and Feature Ablation are computed using the corresponding implementation provided by \texttt{Captum} \cite{kokhlikyan2020captum}. For the methods GradCAM \cite{selvaraju2017grad}, EigenCAM \cite{muhammad2020eigen} and GradCAM++ \cite{chattopadhay2018grad} we utilized the \texttt{pytorch-gradcam} \cite{jacobgilpytorchcam} library.
All attribution results are normalized to lie within the range $[0,1]$ by taking the absolute value and rescaling them based on the maximum to ensure comparability. \\
Note that we excluded Guided Backpropagation when evaluating the MLPMixer architecture since the method was originally designed only for networks with the ReLU activation function.\\
For the experiments in section 5.3 we utilized Lime with three different LASSO regularization parameters $\lambda$. More precisely high sparsity regularization corresponds to $\lambda_{\textit{high}}=0.1$, medium to $\lambda_{\textit{medium}}=0.01$ and no regularization uses an ordinary least square regression approach to estimate the Lime coefficients. For the SLIC variant we provided a feature mask using the SLIC algorithm \cite{achanta2012slic} partitioning an image into approximately $100$ superpixels.
\paragraph{Optimized Aggregation} The aggregation weights for our combination approaches are optimized by estimating the underlying $L2$ metric using $m$ metric evaluation samples yielding $\widehat{\text{SENS}}_{\textit{AVG}}$ and $\widehat{\text{INFD}}$ . In particular, we have:
\begin{align*}
    \widehat{\text{SENS}}_{\textit{AVG}}(\phi^{\omega}) &= \frac{1}{m}\sum_{j=1}^{m} \lVert \phi^{\omega}(x) - \phi^{\omega}(x +\varepsilon^{(j)})\rVert_2^2 \\
    \widehat{\text{INFD}}(\phi^{\omega}) &= \frac{1}{m}\sum_{j=1}^{m} \lVert \big(I^{(j)}\big)^T\phi^{\omega}(x) - \big(f(x)-f(h(x, x_b, I^{(j)})\big)\rVert_2^2 
\end{align*} and the weights are computed by solving:
\begin{align*}
    \text{AGG}_{\textit{robust}}:& \quad  
    \omega^{\textit{robust}} = \arg\min_{\omega \in \Omega}\;\widehat{\text{SENS}}_{\textit{AVG}}(\phi^{\omega})\\ 
 \text{AGG}_{\textit{faith}}:& \quad \omega^{\textit{faith}} = \arg\min_{\omega \in \Omega}\;\widehat{\text{INFD}}(\phi^{\omega})  \\
\text{AGG}_{\textit{opt}}:& \quad \omega^{\textit{opt}} = \arg\min_{\omega \in \Omega}\;\widehat{\text{INFD}}(\phi^{\omega}) + \widehat{\text{SENS}}_{\textit{AVG}}(\phi^{\omega})
\end{align*}
All objectives are reformulated as constrained quadratic programs using the logic described in section 4 of the main paper and optimized using the default solver provided by \texttt{cvxpy} \cite{diamond2016cvxpy}. For $\text{AGG}_{\textit{opt}}$ we additionally normalized both metrics using the Frobenius norm of the respective parameter matrix $\lVert \Gamma^T\Gamma \rVert_F$ to ensure comparability between the two considered metrics. 
\subsection{Model Details}
We downloaded all convolutional models,including VGG16 \cite{simonyan2013deep}, AlexNet \cite{krizhevsky2012imagenet}, ResNet18 \cite{he2016deep}, MobileNetV2 \cite{sandler2018mobilenetv2} and DenseNet121 \cite{huang2017densely},  from \texttt{torchvison} with pre-trained weights. All transformer-based models are downloaded using the \texttt{timm} library \cite{rw2019timm}. More precisely, we utilized the following model variants:\\
DeiT \cite{touvron2021training}:\hspace{1cm}\texttt{deit\_tiny\_patch16\_224.fb\_in1k}\\
ViT \cite{dosovitskiy2020image}:\hspace{1cm}\texttt{vit\_tiny\_patch16\_224.augreg\_in21k\_ft\_in1k}\\ SwinT \cite{liu2021swin}:\hspace{2cm}\texttt{swin\_tiny\_patch4\_window7\_224.ms\_in1k}\\
MLPMixer \cite{tolstikhin2021mlp}:\hspace{1cm} \texttt{mixer\_b16\_224.goog\_in21k\_ft\_in1k}

\section{Additional Experiments and Results}
\subsection{Extended results for ROAD}
\label{app:D1}
\begin{table}[h!]
\centering
\caption{Remove and Debiase (ROAD) metric results on a Resnet18 where $\text{MoRF}_{p}$ evaluates the average decrease in confidence caused by removing the top $p$ percent of the most relevant pixels as indicated by the explanation method.}
\resizebox{\textwidth}{!}{
\begin{tabular}{c|ccccccccc|c}
\toprule
Method & $\text{MoRF}_{10} $& $\text{MoRF}_{20} $& $\text{MoRF}_{30} $ & $\text{MoRF}_{40} $& $\text{MoRF}_{50} $ & $\text{MoRF}_{60} $& $\text{MoRF}_{70} $& $\text{MoRF}_{80} $& $\text{MoRF}_{90} $& \textbf{Average} $\downarrow$ \\
\midrule
Deeplift & -1.10 & -2.02 & -3.02 & -4.15 & -5.42 & -6.85 & -8.56 & -10.57 & -13.03 & -6.30 \\
VarGrad & -2.37 & -4.69 & -6.79 & -8.45 & \underline{-10.05} & \underline{-11.35} & \underline{-12.57} & \underline{-13.63} & \underline{-14.67} & -9.84 \\
GuidedBP & -3.09 & -4.86 & -6.31 & -7.62 & -8.94 & -10.17 & -11.35 & -12.59 & -14.01 & -9.77 \\
IntGrad & -0.85 & -1.79 & -2.82 & -3.96 & -5.31 & -6.86 & -8.67 & -10.70 & -13.13 & -6.23 \\
SmoothGrad & -1.57 & -2.72 & -3.84 & -4.94 & -6.09 & -7.35 & -8.74 & -10.38 & -12.73 & -6.82 \\
InputxGrad & -0.63 & -1.34 & -2.24 & -3.37 & -4.63 & -6.18 & -8.05 & -10.23 & -12.87 & -5.72 \\
Saliency & -0.58 & -1.24 & -2.04 & -3.10 & -4.25 & -5.65 & -7.31 & -9.33 & -12.13 & -5.18 \\
\midrule
$\text{AGG}_{\textit{Mean}}$ & -2.21 & -3.98 & -5.68 & -7.33 & -8.82 & -10.27 & -11.60 & -12.92 & -14.31 & -9.57 \\
$\text{AGG}_{\textit{Var}}$ & -2.21 & -3.98 & -5.68 & -7.33 & -8.82 & -10.28 & -11.60 & -12.93 & -14.31 & -9.57 \\
$\text{AGG}_{\textit{faith}}$ & -2.66 & -4.50 & -6.12 & -7.49 & -8.79 & -9.99 & -11.28 & -12.57 & -14.05 & -9.50 \\
$\text{AGG}_{\textit{opt}}$ & \underline{-3.30} & \underline{-5.41} & \underline{-7.14} & \underline{-8.62} & -9.96 & -11.25 & -12.45 & -13.51 & -14.58 & \underline{-10.25} \\
$\text{AGG}_{\textit{robust}}$ & \textbf{-3.36} & \textbf{-5.48} & \textbf{-7.17} & \textbf{-8.78} & \textbf{-10.15} & \textbf{-11.40} & \textbf{-12.65} & \textbf{-13.70} & \textbf{-14.68} & \textbf{-10.37} \\
\bottomrule
\end{tabular}
}
\end{table}
\subsection{Results on other Datasets}
\label{app:D2}
To substantiate the findings in the main paper, we repeated the experiments in section 5.1 on four additional datasets, namely CIFAR10 as well as three medical image datasets BloodMNIST, DermaMNIST and PathMNIST \cite{yang2023medmnist}. Tables 5 and 6 summarize the corresponding results for the considered faithfulness and robustness metrics based on 500 images evaluated with a pre-trained ResNet18 model. 
\begin{table*}[h] \centering
        \caption{$\text{INFD}$ and $\text{FCOR}$ results for different attribution methods and aggregation strategies for a ResNet18 model. Our approach $\text{AGG}_{\textit{faith}}$ consistently outperforms all other techniques and $\text{AGG}_{\textit{opt}}$ is either second best or comparable.}
         \begin{tabular}{c|*{2}{c}|*{2}{c}|*{2}{c}|*{2}{c}}
         \toprule
         Feature
         &  \multicolumn{2}{c}{CIFAR10} 
         &  \multicolumn{2}{c}{BloodMNIST}
         & \multicolumn{2}{c}{DermaMNIST} 
        & \multicolumn{2}{c}{PathMNIST}
        \\ Attribution
       &  $\text{INFD}\downarrow$ & $\text{FCOR}\uparrow$ 
       &   $\text{INFD}\downarrow$& $\text{FCOR}\uparrow$ &   $\text{INFD}\downarrow$& $\text{FCOR}\uparrow$  &
       $\text{INFD}\downarrow$& $\text{FCOR}\uparrow$ \\   
         \midrule
         Saliency &4.129 &0.159  &\underline{11.60}& 0.393&0.324 & 0.321 &5.394  & 0.152\\
         DeepLift &3.928 & 0.252  &15.12 &0.278 &0.354 &0.233 & 5.294 & 0.159\\
         IntGrad &4.016 &0.229  &12.97& 0.290&\underline{0.321} &0.309  & 5.145 &\underline{0.190}\\
         InputxGrad &4.326 &0.133  &13.56&0.206 &0.322 &0.320 &5.466  & 0.118\\ 
         SmoothGrad  &3.736 &0.313  &13.04&0.326 &0.360 &0.276 & 5.395 & 0.127\\
         VarGrad  &3.607 &0.319&11.84  &0.379 &0.380 &0.145 &  6.106& 0.103\\
        \midrule
         $\text{AGG}_{\textit{Mean}}$  &3.802 &0.290& 12.71&0.390 &0.332 & 0.362& 5.153 & 0.183\\
         $\text{AGG}_{\textit{Var}}$  &3.817 &0.290&12.73&0.391 &0.335 &0.361& \underline{5.132} & 0.186\\
         $\text{AGG}_{\textit{opt}}$  \textbf{(ours)} &\underline{3.538} &\underline{0.343} &11.80 &\underline{0.414} & 0.322&\underline{0.370}& 5.135 &0.187\\
         $\text{AGG}_{\textit{faith}}$  \textbf{(ours)} &\textbf{3.342} &\textbf{0.378}&\textbf{10.66}&\textbf{0.465}&\textbf{0.281} &\textbf{0.433}& \textbf{4.850} & \textbf{0.286}\\
         \bottomrule
     \end{tabular}

     \end{table*}

\begin{table*}[h] \centering

     \caption{$\text{SENS}_{\text{AVG}}$ ($\text{S}_{\text{AVG}}$) and $\text{SENS}_{\text{MAX}}$ ($\text{S}_{\text{MAX}}$) results for gradient-based attribution methods and different aggregation strategies for a ResNet18 model. Our approach $\text{AGG}_{\textit{robust}}$ consistently outperforms all other techniques followed by $\text{AGG}_{\textit{opt}}$ as second best.}
         \begin{tabular}{c|*{2}{c}|*{2}{c}|*{2}{c}|*{2}{c}}
         \toprule
         Feature
         &  \multicolumn{2}{c}{CIFAR10} 
         &  \multicolumn{2}{c}{BloodMNIST}
         & \multicolumn{2}{c}{DermaMNIST} 
        & \multicolumn{2}{c}{PathMNIST}
        \\ Attribution
       &  $\text{S}_{\text{AVG}}\downarrow$ & 
       $\text{S}_{\text{MAX}}\downarrow$ & 
       $\text{S}_{\text{AVG}}\downarrow$&
       $\text{S}_{\text{MAX}}\downarrow$ &
       $\text{S}_{\text{AVG}}\downarrow$&
       $\text{S}_{\text{MAX}}\downarrow$&
       $\text{S}_{\text{AVG}}\downarrow$&
       $\text{S}_{\text{MAX}}\downarrow$ \\     
         \midrule
         Saliency &0.916 &1.143&0.696& 0.882&0.787 & 0.993 &0.942 &1.108\\
         DeepLift &0.805 & 1.016&0.514 &0.644 &0.679 &0.936& 0.723&0.818 \\
         IntGrad &0.820 &1.029&0.481& 0.622&0.673 &0.861 & 0.833&0.955 \\
         InputxGrad &0.910 &1.152&0.708&0.893 &0.795 &1.000 &0.932 & 1.076\\ 
         SmoothGrad  &0.818 &0.978&0.549&0.683 &0.566 &0.694 &0.828 &1.000\\
         VarGrad  &0.617 &0.953&0.449&0.677 &\underline{0.390} &0.606 &0.583 &0.961\\
        \midrule
         $\text{AGG}_{\textit{Mean}}$  &0.553 &0.699& 0.384&0.526 &0.475 & 0.615 & 0.537&0.655\\
         $\text{AGG}_{\textit{Var}}$  &0.549 &0.689&0.386&0.526 &0.474 &0.609 &0.536 &0.653\\
         $\text{AGG}_{\textit{opt}}$  \textbf{(ours)} &\underline{0.492} &\underline{0.657} &\underline{0.343} &\underline{0.485} &0.411&\underline{0.567} &\underline{0.457} & \textbf{0.619}\\
         $\text{AGG}_{\textit{robust}}$  \textbf{(ours)} &\textbf{0.491} &\textbf{0.650}&\textbf{0.339}&\textbf{0.476}&\textbf{0.389} &\textbf{0.547} &\textbf{0.439} & \underline{0.627}\\
         \bottomrule
     \end{tabular}

     \end{table*}

\subsection{Computation times of different aggregation strategies}
\label{app:D3}
In Table 7 we report the time required to retrieve optimal aggregation weights across 7 explainers for different models evaluated on an NVIDIA RTX A5000 GPU and averaged over 100 samples with corresponding standard deviations:
\begin{table}[h!]
    \centering
        \caption{Inference times to perform weight optimization based on seven explanation methods for $\text{AGG}_{\textit{faith}}$ and $\text{AGG}_{\textit{robust}}$ as average over 100 samples with corresponding standard deviation. }
    \begin{tabular}{lcccccc}
        \toprule
        Time (s)  & VGG16 & ResNet18 &MobileNetV2 & DenseNet121 & DeiT & SwinT \\
        \midrule
        $\text{AGG}_{\textit{faith}}$  &0.79 $\pm0.06$  &0.75 $\pm 0.06$  &0.77 $\pm 0.06$&   1.49 $\pm 0.51$&1.33 $\pm0.92$  &0.83 $\pm 0.09$ \\
        $\text{AGG}_{\textit{robust}}$ & 22.57 $\pm0.13$ &7.54 $\pm 0.25$  &13.06 $\pm0.35$&37.22 $\pm 4.01$& 17.96 $\pm1.78$  &33.34 $\pm 1.17$ \\
        $\text{AGG}_{\textit{opt}}$ & 23.39 $\pm 0.14$ &8.31 $\pm0.23$  & 13.91 $\pm0.27$&38.52 $\pm 3.28$&19.39 $\pm4.59$  &34.47 $\pm 1.13$ \\
        \bottomrule
    \end{tabular}

\end{table}

We believe the additional computational cost imposed by our aggregation technique is minor compared to the strong improvements in explanation metrics.

\subsection{Ablation Studies regarding number and diversity of combined explanations}  
\label{app:D4}
\paragraph{Varying the number of methods to be aggregated}
We anticipate that our method will benefit from an increasing number of considered attributions by automatically down-weighting disadvantageous explanations. This behaviour is also exemplified in the last two rows of Figure 2 where deteriorated results received zero weight. To further investigate this, we performed a dedicated experiment in which we increased the number of feature attribution methods incrementally from 2 to 7 on a ResNet18 over 100 samples. The results in Table 8 and 9 show that the metrics do indeed get better for robustness and faithfulness, but the improvements seem to saturate at a certain point. The ordered set of explainers that were used for this experiment is: DeepLift, VarGrad, GuidedBackprop, SmoothGrad, IntGrad, InputxGrad, Saliency. 
\begin{table}[h!]
\centering
\begin{minipage}{.45\linewidth}
    \centering
       \caption{Robustness metrics for $\text{AGG}_{\textit{robust}}$ combining an increasing number of explanation methods}
    \begin{tabular}{lcccccc}
        \toprule
        $\text{AGG}_{\textit{robust}}$ & 2 & 3 & 4 & 5 & 6 & 7 \\
        \midrule
        $\text{S}_{\text{AVG}} \downarrow$  & 0.52 & 0.43 & 0.42 & 0.41 & 0.41 & 0.41 \\
        $\text{S}_{\text{MAX}} \downarrow$  & 0.68 & 0.54 & 0.52 & 0.52 & 0.51 & 0.50 \\
        \bottomrule
    \end{tabular}

\end{minipage}%
\hspace{1cm}
\begin{minipage}{.45\linewidth}
    \centering
        \caption{Faithfulness metrics for $\text{AGG}_{\textit{faith}}$ combining an increasing number explanation methods}
    \begin{tabular}{lcccccc}
        \toprule
        $\text{AGG}_{\textit{faith}}$ & 2 & 3 & 4 & 5 & 6 & 7 \\
        \midrule
        INFD $\downarrow$ & 2.69 & 2.46 & 2.44 & 2.43 & 2.43 & 2.43 \\
        FCOR $\uparrow$ & 0.46 & 0.49 & 0.50 & 0.51 & 0.51 & 0.50 \\
        \bottomrule
    \end{tabular}

\end{minipage}
\end{table}

\paragraph{Combining different types of explanation techniques}
Concerning the diversity of explainers to use, we argue that our approach can be applied to fruitfully combine all methods that output fairly comparable explanations. This includes gradient-based and perturbation-based ones and we expect that also a higher diversity will be advantageous. To further investigate this, we evaluated our method again on 100 samples with a ResNet18 using 3 gradient-based methods, 3 perturbation-based methods, and a combination of all 6. The results in Tables 10 and 11 indicate that including both types does also benefit our aggregation approach. For instance, the perturbation-based methods seem to be significantly more robust than the gradient-based ones but by combining them, we can even further improve their robustness. We can also enhance faithfulness this way. The gradient-based methods used in this experiment are DeepLift, SmoothGrad, InputxGrad and the perturbation-based methods used in this experiment are Lime, Shapley Values and Feature Ablation.\\
All methods have been computed based on their default implementation in \texttt{Captum} \cite{kokhlikyan2020captum} while we additionally used a feature mask of 16x16 patches for all perturbation-based methods.

\begin{table}[h!]
\centering
\begin{minipage}{.45\linewidth}
    \centering
        \caption{Robustness metrics for $\text{AGG}_{\textit{robust}}$ combining different types of explanation methods}
    \begin{tabular}{lccc}
        \toprule
        $\text{AGG}_{\textit{robust}}$ & Grad.-based & Pert.-based & Both \\
        \midrule
        $\text{S}_{\text{AVG}} \downarrow$ & 0.65 & 0.49 & \textbf{0.46} \\
        $\text{S}_{\text{MAX}}\downarrow$ & 0.77 & 0.61 & \textbf{0.57} \\
        \bottomrule
    \end{tabular}

    \label{tab:agg_robust}
\end{minipage}%
\hspace{1cm}
\begin{minipage}{.45\linewidth}
    \centering
        \caption{Faithfulness metrics for $\text{AGG}_{\textit{faith}}$ combining different types of explanation methods}
    \begin{tabular}{lccc}
        \toprule
        $\text{AGG}_{\textit{faith}}$ & Grad.-based & Pert.-based & Both \\
        \midrule
        INFD $\downarrow$ & 2.82 & 2.92 & \textbf{2.76} \\
        FCOR $\uparrow$ & 0.46 & 0.44 & \textbf{0.47} \\
        \bottomrule
    \end{tabular}

\end{minipage}
\end{table}


\end{document}